\documentclass[11pt]{article}
\PassOptionsToPackage{numbers}{natbib}

\usepackage[preprint]{neurips_2018}

\usepackage{hyperref} 
\usepackage[ruled,lined]{algorithm2e}
\usepackage{graphicx}

\usepackage{booktabs}       
\usepackage{amsfonts}       
\usepackage{amsmath}
\usepackage{amsthm}

\newtheorem{theorem}{Theorem}
\newtheorem{definition}[theorem]{Definition}
\newtheorem{lemma}[theorem]{Lemma}

\newcommand{\ignore}[1]{}
\newcommand{\setR}{\mathbb{R}}
\newcommand{\einsum}{\texttt{einsum} }
\newcommand{\id}{\mathbb{I}}

\DeclareMathOperator{\diag}{diag}
\newcommand{\norm}[1]{\left\Vert#1\right\Vert}
\newcommand{\bnorm}{\Big\Vert}
\newcommand{\lnorm}[1]{\big\Vert#1\big\Vert}

\begin{document}

\title{A Simple and Efficient Tensor Calculus for Machine Learning}

\author{
  S\"oren Laue\\
  Friedrich-Schiller-Universit\"at Jena\\
  \&\\
  Data Assessment Solutions GmbH\\
  \texttt{soeren.laue@uni-jena.de} \\
  \And Matthias Mitterreiter\\
  Friedrich-Schiller-Universit\"at Jena\\
  Germany\\
  \texttt{matthias.mitterreiter@uni-jena.de} \\
  \And Joachim Giesen\\
  Friedrich-Schiller-Universit\"at Jena\\
  Germany\\
  \texttt{joachim.giesen@uni-jena.de}
}
\maketitle


\ignore{
\author{S\"oren Laue \and Matthias Mitterreiter \and Joachim Giesen\\
Friedrich-Schiller-Universit\"at Jena\\
Faculty of Mathematics and Computer Science\\
Ernst-Abbe-Platz 2\\
07743 Jena, Germany\\
\{soeren.laue, matthias.mitterreiter, joachim.giesen\}{@}uni-jena.de} \maketitle
}

\begin{abstract}
  Computing derivatives of tensor expressions, also known as tensor
  calculus, is a fundamental task in machine learning. A key concern
  is the efficiency of evaluating the expressions and their
  derivatives that hinges on the representation of these
  expressions. Recently, an algorithm for computing higher order
  derivatives of tensor expressions like Jacobians or Hessians has
  been introduced that is a few orders of magnitude faster than
  previous state-of-the-art approaches. Unfortunately, the approach is
  based on Ricci notation and hence cannot be incorporated into
  automatic differentiation frameworks from deep learning like TensorFlow, PyTorch,
  autograd, or JAX that use the simpler Einstein notation. This leaves
  two options, to either change the underlying tensor representation
  in these frameworks or to develop a new, provably correct algorithm
  based on Einstein notation. Obviously, the first option is
  impractical. Hence, we pursue the second option. Here, we show that
  using Ricci notation is not necessary for an efficient tensor
  calculus and develop an equally efficient method for the simpler
  Einstein notation. It turns out that turning to Einstein notation
  enables further improvements that lead to even better efficiency.
  
  The methods that are described in this paper have been implemented in the online tool \url{www.MatrixCalculus.org} for computing derivatives of matrix and tensor expressions.
  
  An extended abstract of this paper appeared as ``A Simple and Efficient Tensor Calculus'', AAAI 2020~\cite{LaueMG20a}.
\end{abstract}

\section{Introduction}

Many problems in machine learning are naturally written in terms of
tensor expressions. Any algorithmic method for computing derivatives
of such expressions is called a tensor calculus.  Standard automatic
differentiation (deep learning) frameworks like TensorFlow~\cite{tf},
PyTorch~\cite{pytorch}, autograd~\cite{Maclaurin15}, and
JAX~\cite{JAX} are very efficient when computing derivatives of
scalar-valued functions. However, evaluating the derivatives of
non-scalar-valued functions, for instance, Jacobians or Hessians, in
these frameworks is up to three orders of magnitude slower than
evaluating the derivatives that are computed by the approach
of Laue et al.~\cite{LaueMG2018}.

There have been some recent attempts to alleviate this lack of
efficiency by accelerating the underlying linear algebra using
automatic batching and optimizing the computational graphs of the
derivatives~\cite{XLA}. These improvements have been incorporated into
TensorFlow and JAX. However, the improvements are rather small and the
efficiency gap of up to three orders of magnitude still persists.

On the other hand, the approach of Laue et al.~\cite{LaueMG2018} relies crucially
on Ricci notation and therefore cannot be incorporated into standard
deep learning frameworks that use the simpler Einstein notation. Here,
we remove this obstacle and provide an efficient tensor calculus in
Einstein notation. Already the simple version of our approach is as
efficient as the approach by Laue et al.~\cite{LaueMG2018}. We provide further
improvements that lead to an even better efficiency.

Ricci notation distinguishes between co- and contravariant indices,
that is, upper and lower indices. This distinction is necessary in
order to compute derivatives in a mathematical correct way. Consider
for instance the simple expression $x^\top Ax$. If we want to compute
the derivative of this expression with respect to the vector $x$,
then, at some point, we face the problem of computing the derivative
of $x^\top$. However, this derivative in Ricci notation is the
delta-tensor $\delta_{ij}$ that cannot be represented in linear
algebra. Note, it is not the identity matrix which is represented in
Ricci notation as $\delta^i_j$. Hence, in order to represent the
derivative in a mathematical correct way, upper and lower indices are
necessary. This problem has its roots in mathematical tensor analysis,
where tensors are used for representing multilinear functions by their
values on a set of basis vectors. These values are stored in a tensor,
that is, a multi-dimensional array. Upper and lower indices are used
to distinguish between vector space and dual vector space components
that transform differently under basis changes. In the example
expression, $x$ is a vector while $x^\top$ is a co-vector from the
dual vector space.

In machine learning tensors are typically not used for representing
multi-linear functions, but simply as multi-dimensional arrays for
storing data and parameters. Hence, there is no need to distinguish
different types of components. Indices can just be used for accessing
the different tensor components. This is basically Einstein notation
that is used in all deep learning frameworks.

The contribution of this paper is an efficient and coherent method for
computing tensor derivatives in Einstein notation together with a
correctness proof. In reverse mode automatic differentiation, our
method is equivalent to the efficient approach in~\cite{LaueMG2018}
for computing higher order derivatives. Additionally, we show that
reverse mode is not optimal. A combination of reverse and forward
mode, known as cross-country mode, can be more efficient. Efficiency
can be further improved by compressing higher order derivatives.

For validating our framework we compute Hessians for several machine
learning problems. It turns out that our method, because of the
additional optimizations, outperforms the approach
of~\cite{LaueMG2018} which is already a few orders of magnitude more
efficient than TensorFlow, PyTorch, autograd, and JAX.

\paragraph{Related Work.}

Many details on the fundamentals and more advanced topics of automatic
differentiation can be found in the book
by Griewank and Walther~\cite{Griewank08}. Baydin et al.~\cite{Baydin18} provide an excellent survey on
automatic differentiation for machine learning.

Computing derivatives of non-scalar-valued functions is discussed
in the work by Pearlmutter~\cite{Pearlmutter94}. In this approach, if the function returns an
$n$-dimensional vector, then its derivative is computed by treating
each entry as a separate scalar-valued function. The same idea is
employed in almost all implementations for computing derivatives of
non-scalar-valued functions. Gebremedhin et al.~\cite{Gebremedhin09}
introduce some optimizations based on graph coloring algorithms.

Magnus and Neudecker~\cite{Magnus07} can compute derivatives with
respect to vectors and matrices. At the core of their approach,
matrices are turned into vectors by stacking columns of a matrix into
one long vector. Then, the Kronecker matrix product is used to emulate
higher order tensors. This approach works well for computing first
order derivatives of scalar-valued functions. However, it is not
practicable for computing higher order derivatives.

Giles~\cite{Giles08} collects a number of derivatives for matrix
operators, i.e., pushforward and pullback functions for automatic
differentiation. Similarly, Seeger et al.~\cite{Seeger17} provide methods and code
for computing derivatives of Cholesky factorizations, QR
decompositions, and symmetric eigenvalue decompositions. However, they
all require that the output function is scalar-valued, and hence,
cannot be generalized to higher order derivatives.

Kakade and Lee~\cite{Kakade2018} consider non-smooth functions and
provide a provably correct algorithm that returns an element from the
subdifferential. However, their algorithm is also restricted to
scalar-valued functions.

Another line of research focuses on automatic differentiation from a
programming language point of view. The goal is to incorporate
gradient computations into programming languages with the goal of
fully general differentiable programming~\cite{LeCun2018}. Work
towards this goal includes the Tangent
package~\cite{Merrienboer2018a}, the Myia
project~\cite{Merrienboer2018b}, and the approach
of Fei et al.~\cite{Fei2018}. So far this work is again restricted to
scalar-valued functions.


Recently, also second order information has been considered for
training deep nets. For instance, this information can be exploited
for escaping one of the many saddle points, but also for turning the
final classifier more robust~\cite{DeyNPP18}. Furthermore, some
differences in the convergence behavior can be explained by looking at
the spectrum of the Hessian of the objective
function~\cite{ghorbani2019,Sagun18,Yao2018}. However, so far it has
been prohibitive to compute the full Hessian even for small
networks. Here, in our experiments, we also compute the Hessian of a
small neural net.

\section{Einstein Notation}
\label{sec:tensorLanguage}
\begin{table}[ht!]
\centering    
\caption{Comparison of different linear algebra notations.}
    \label{tab:notation}
  \begin{tabular}{ccc}
      \toprule
      vectorized & Ricci & Einstein \\
      \midrule
      $y x^\top$ & $y^ix_j$ & $y*_{(i, j, ij)} x$\\\hline
      $Ax$ & $A^i_j x^j$ & $A *_{(ij, j, i)}x$\\
      $y^\top x$ & $y_ix^i$ & $y * _{(i, i, \emptyset)} x$\\
      $AB$ & $A^i_jB^j_k$  & $A *_{(ij, jk, ik)} B$\\ \hline
      $y\odot x$ & $y^ix^i $ & $y*_{(i, i, i)} x$\\
      $A\odot B$ & $A^i_jB^i_j$ & $A*_{(ij, ij, ij)} B$\\
      $A\cdot\diag(x)$ & $A^i_j x^i$ & $A*_{(ij, i, ij)} x$\\
      \bottomrule
    \end{tabular}
\end{table}

In tensor calculus one distinguishes three types of multiplication,
namely inner, outer, and element-wise multiplication. Indices are
important for distinguishing between these types. For tensors $A, B,$
and $C$ any multiplication of $A$ and $B$ can be written as
\[
C[s_3] = \sum_{(s_1\cup s_2)\setminus s_3} A[s_1] \cdot B[s_2],
\]
where $C$ is the result tensor and $s_1, s_2$, and $s_3$ are the index
sets of the left argument, the right argument, and the result tensor,
respectively. The summation is only relevant for inner products that
in Ricci calculus are denoted by shared upper and lower indices.  If
one does not want to distinguish between upper and lower indices, then
the summation must be made explicit through the result tensor. The
standard way to do so is by excluding the index for summation from the
index set of the result tensor. Hence, the index set of the result
tensor is always a subset of the union of the index sets of the
multiplication's arguments, that is, $s_3\subseteq (s_1\cup s_2)$. In
the following we denote the generic tensor multiplication simply as $C
= A *_{(s_1, s_2, s_3)} B$, where $s_3$ explicitly represents the
index set of the result tensor. This notation is basically identical
to the tensor multiplication \einsum in NumPy, TensorFlow, and
PyTorch, and to the notation used in the Tensor Comprehension
Package~\cite{Vasilache2018}.

The $*_{(s_1, s_2, s_3)}$-notation comes close to standard Einstein
notation. In Einstein notation the index set $s_3$ of the output is
omitted and the convention is to sum over all shared indices in $s_1$
and $s_2$. This, however, restricts the types of multiplications that
can be represented. The set of multiplications that can be represented
in standard Einstein notation is a proper subset of the
multiplications that can be represented by our notation. For instance,
standard Einstein notation is not capable of representing element-wise
multiplications directly. Still, in the following we refer to the
$*_{(s_1, s_2, s_3)}$-notation simply as Einstein notation as it is
standard practice in all deep learning frameworks.

Table~\ref{tab:notation} shows examples of tensor expressions in
standard linear algebra notation, Ricci calculus, and Einstein
notation. The first group shows an outer product, the second group
shows inner products, and the last group shows examples of
element-wise multiplications. As can be seen in
Table~\ref{tab:notation}, Ricci notation and Einstein notation are
syntactically reasonably similar. However, semantically they are quite
different. As pointed out above, Ricci notation differentiates between
co- and contravariant dimensions/indices and Einstein notation does
not. While this might seem like a minor difference, it does have
substantial implications when computing derivatives. For instance,
when using Ricci notation, forward and reverse mode automatic
differentiation can be treated in the same way~\cite{LaueMG2018}. This
is no longer the case when using Einstein notation.

We can show that the generic tensor multiplication operator $*_{(s_1,
  s_2, s_3)}$ is associative, commutative, and satisfies the
distributive property. Our tensor calculus, that we introduce in the
next section, makes use of all three properties. By $s_1s_2$ we denote
the concatenation of the index sets $s_1$ and~$s_2$. An example where the concatenation of two index sets is used is the outer product of two vectors, see e.g.,\ the first row in Table~\ref{tab:notation}.

\begin{lemma}[Associativity]
  \label{lem:1}
  Let $s_1, s_2, s_3$, and $s_4$ be index sets with $s_3\subseteq
  s_1\cup s_2$ and $s_4 \cap (s_1\cup s_2) = \emptyset$.  Then it
  holds that
\[
  \left(A *_{(s_1, s_2s_4, s_3s_4)} B\right) *_{(s_3s_4, s_4, s_3)} C = A *_{(s_1, s_2, s_3)} \left(B *_{(s_2s_4, s_4, s_2)} C\right).
\]
\end{lemma}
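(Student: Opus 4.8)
The plan is to reduce both sides to a single multiple sum over the same index set and then invoke commutativity of addition together with associativity and commutativity of the scalar product (essentially Fubini for finite sums). First I would record the two disjointness facts that drive everything: since $s_3 \subseteq s_1 \cup s_2$ and $s_4 \cap (s_1 \cup s_2) = \emptyset$, the set $s_4$ is disjoint from $s_1$, $s_2$, and $s_3$. In particular, every concatenation appearing in the statement ($s_2 s_4$ and $s_3 s_4$) is a genuine disjoint union, and each of the four multiplications is well-defined because its result index set is contained in the union of its argument index sets (e.g.\ $s_3 s_4 \subseteq s_1 \cup s_2 \cup s_4$ and $s_3 \subseteq s_3 \cup s_4$).

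Next I would unfold the left-hand side. Writing $D = A *_{(s_1, s_2 s_4, s_3 s_4)} B$, the definition gives
\[
  D[s_3 s_4] = \sum_{(s_1 \cup s_2 \cup s_4) \setminus (s_3 \cup s_4)} A[s_1]\, B[s_2 s_4] = \sum_{(s_1 \cup s_2) \setminus s_3} A[s_1]\, B[s_2 s_4],
\]
where the second equality uses that $s_4$ is disjoint from $s_1 \cup s_2$. Multiplying by $C$ and noting that the outer contraction sums over exactly $(s_3 \cup s_4) \setminus s_3 = s_4$ yields
\[
  \text{LHS}[s_3] = \sum_{s_4} \sum_{(s_1 \cup s_2) \setminus s_3} A[s_1]\, B[s_2 s_4]\, C[s_4].
\]
Performing the analogous unfolding for the right-hand side, where $E = B *_{(s_2 s_4, s_4, s_2)} C$ contracts over $s_4$ and then $A *_{(s_1, s_2, s_3)} E$ contracts over $(s_1 \cup s_2) \setminus s_3$, produces the mirror image
\[
  \text{RHS}[s_3] = \sum_{(s_1 \cup s_2) \setminus s_3} \sum_{s_4} A[s_1]\, B[s_2 s_4]\, C[s_4].
\]

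Finally I would observe that both expressions are summed over the same total index set $\big((s_1 \cup s_2) \setminus s_3\big) \cup s_4$, a disjoint union by the first step, and have identical summands; hence interchanging the two finite sums proves LHS equals RHS entrywise. The step I expect to be the main obstacle is the index bookkeeping, namely verifying that the contracted index sets on the two sides collapse to exactly $s_4$ and $(s_1 \cup s_2) \setminus s_3$, respectively. This is precisely where both hypotheses are used: $s_3 \subseteq s_1 \cup s_2$ guarantees that $s_4$ stays out of the result of the first factor on each side, and $s_4 \cap (s_1 \cup s_2) = \emptyset$ guarantees that the two summation ranges are disjoint so that the order of summation may be swapped. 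Once the ranges are matched, the remaining equality is routine.
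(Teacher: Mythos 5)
Your proposal is correct and follows essentially the same route as the paper: both unfold the two sides into iterated sums, identify them as the same sum over the disjoint union $\left((s_1\cup s_2)\setminus s_3\right)\cup s_4$, and conclude by exchanging the order of finite summation. Your additional bookkeeping verifying that the contracted index sets collapse to $s_4$ and $(s_1\cup s_2)\setminus s_3$ is exactly the step the paper leaves implicit.
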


\begin{proof}
  We have
  \begin{align*}
    \left(A *_{(s_1, s_2s_4, s_3s_4)} B\right)
    *_{(s_3s_4, s_4, s_3)}  C
    &= \quad \sum_{s_4}\left(\sum_{(s_1\cup s_2)\setminus s_3}
    A[s_1] \cdot B[s_2s_4]\right)\cdot C[s_4] \\
    &= \quad\sum_{\left((s_1\cup s_2)\setminus s_3\right) \cup s_4}
    A[s_1] \cdot B[s_2s_4]\cdot C[s_4] \\
    &= \quad\sum_{(s_1\cup s_2)\setminus s_3}
    A[s_1]\left(\sum_{s_4} B[s_2s_4]\cdot C[s_4]\right) \\
    &= \quad A *_{(s_1, s_2, s_3)} \left(B *_{(s_2s_4,s_4, s_2)} C\right)
  \end{align*}
\end{proof}

Unlike standard matrix multiplication tensor multiplication is
commutative.

\begin{lemma}[Commutativity]
  \label{lem:2}
  It holds that
  \[
  A *_{(s_1, s_2, s_3)} B = B *_{(s_2, s_1, s_3)} A.
  \]
\end{lemma}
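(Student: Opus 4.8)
The plan is to prove the identity componentwise by unfolding the definition of the generic tensor multiplication on each side and checking that the two resulting sums are literally the same. Both sides produce a tensor whose index set is $s_3$, so it suffices to compare their entries at an arbitrary fixed assignment of the indices in $s_3$.

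First I would expand the left-hand side according to the defining equation of $*_{(s_1,s_2,s_3)}$, giving the entry
\[
\big(A *_{(s_1, s_2, s_3)} B\big)[s_3] = \sum_{(s_1\cup s_2)\setminus s_3} A[s_1] \cdot B[s_2],
\]
and expand the right-hand side the same way, obtaining
\[
\big(B *_{(s_2, s_1, s_3)} A\big)[s_3] = \sum_{(s_2\cup s_1)\setminus s_3} B[s_2] \cdot A[s_1].
\]
Then I would invoke two elementary facts. First, set union is commutative, so $(s_1\cup s_2)\setminus s_3 = (s_2\cup s_1)\setminus s_3$, which means both expressions sum over exactly the same set of free indices. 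Second, the summands are products of individual scalar tensor entries, and scalar multiplication is commutative, so $A[s_1]\cdot B[s_2] = B[s_2]\cdot A[s_1]$ term by term. Combining these two observations identifies the two sums, which completes the argument.

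There is no real obstacle here; the only thing to be careful about is the index-set bookkeeping. The reason the statement holds, in contrast to ordinary matrix multiplication which is not commutative, is that the output index set $s_3$ is specified explicitly and symmetrically in the operator, so swapping the two arguments together with their index sets $s_1$ and $s_2$ leaves the prescribed layout of the result unchanged. Thus commutativity of the tensor product simply reflects the commutativity of the underlying scalar product rather than any special structure of the tensors, and the proof reduces to a one-line rewriting once the definitions are unfolded.
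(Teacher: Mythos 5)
Your proposal is correct and is exactly the paper's argument, merely spelled out: the paper's proof likewise appeals to the definition of $*_{(s_1,s_2,s_3)}$, the commutativity of scalar multiplication, and the commutativity of set union. Your componentwise unfolding adds useful detail but no new idea.
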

\begin{proof}
  Follows immediately from our definition of the tensor multiplication
  operator $*_{(s_1, s_2, s_3)}$, the commutativity of the scalar
  multiplication, and the com\-mutativity of the set union operation.
\end{proof}

\begin{lemma}[Distributive property]
  \label{lem:3}
  Let $s_1, s_2$, and $s_3$ be index sets with $s_3\subseteq s_1\cup
  s_2$. It holds that
  \[
  A *_{(s_1, s_2, s_3)} B + A *_{(s_1, s_2, s_3)} C = A *_{(s_1, s_2,
    s_3)} \left(B + C\right).
  \]
\end{lemma}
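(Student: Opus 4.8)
The plan is to unfold both sides of the claimed identity through the defining summation formula for the generic tensor multiplication, namely $(A *_{(s_1,s_2,s_3)} B)[s_3] = \sum_{(s_1\cup s_2)\setminus s_3} A[s_1]\cdot B[s_2]$, and then reduce the statement to the distributivity of scalar multiplication over scalar addition. Since all three products on the two sides share the same triple of index sets $(s_1,s_2,s_3)$, the summation range $(s_1\cup s_2)\setminus s_3$ is identical in every term, which is exactly what makes the argument go through cleanly.

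First I would expand the left-hand side componentwise. Both summands are tensors indexed by $s_3$, so evaluating at a fixed multi-index gives $\sum_{(s_1\cup s_2)\setminus s_3} A[s_1]\cdot B[s_2]$ for the first term and $\sum_{(s_1\cup s_2)\setminus s_3} A[s_1]\cdot C[s_2]$ for the second. Because these two sums run over one and the same set of summation indices, I can merge them into a single sum $\sum_{(s_1\cup s_2)\setminus s_3}\big(A[s_1]\cdot B[s_2] + A[s_1]\cdot C[s_2]\big)$. This step is the only place where the agreement of the index sets is used, and it is the crux of the whole argument.

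Next I would apply distributivity at the scalar level, factoring out $A[s_1]$ to obtain $\sum_{(s_1\cup s_2)\setminus s_3} A[s_1]\cdot\big(B[s_2]+C[s_2]\big)$. Here I rely on the fact that tensor addition is defined entrywise, so that $(B+C)[s_2] = B[s_2] + C[s_2]$; implicitly this also uses that $B$ and $C$ are conformable, i.e.\ that they share the index set $s_2$, so that $B+C$ is well defined. Substituting this identity turns the expression into $\sum_{(s_1\cup s_2)\setminus s_3} A[s_1]\cdot (B+C)[s_2]$, which by the defining formula is precisely $\big(A *_{(s_1,s_2,s_3)}(B+C)\big)[s_3]$, the right-hand side.

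I do not expect a genuine obstacle: the proof is a direct componentwise calculation and the result is essentially inherited from the ring axioms of the underlying scalars. The one point that deserves a sentence of care is the legitimacy of combining the two sums, which hinges on the shared summation range guaranteed by the common index sets $(s_1,s_2,s_3)$; everything else is routine.
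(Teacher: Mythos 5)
Your proof is correct and takes essentially the same route as the paper, which simply states that the identity ``follows from the distributive property of the scalar multiplication''; your componentwise expansion, merging of the two sums over the common range $(s_1\cup s_2)\setminus s_3$, and factoring of $A[s_1]$ is exactly the calculation the paper leaves implicit. No gaps.
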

\begin{proof}
  Follows from the distributive property of the scalar multiplication.
\end{proof}

\section{Tensor Calculus}
\label{sec:tensorCalculus}

Now we are prepared to develop our tensor calculus. We start by giving
the definition of the derivative of a tensor-valued expression with
respect to a tensor. For the definition, we use $\norm{A}=\sqrt{\sum_s
  A[s]^2}$ as the norm of a tensor $A$ which coincides with the Euclidean norm if $A$ is a vector and with the Frobenius norm if $A$ is a matrix.

\begin{definition}[Fr\'echet Derivative] \label{def:1}
  Let $f\colon \setR^{n_1\times n_2\times\ldots\times n_k}\to
  \setR^{m_1\times m_2\times\ldots\times m_l}$ be a function that
  takes an order-$k$ tensor as input and maps it to an order-$l$
  tensor as output. Then, $D\in\setR^{m_1\times m_2\times ... \times
    m_l \times n_1\times n_2\times\ldots\times n_k}$ is called the
  derivative of $f$ at $x$ if and only if
  \[
  \lim_{h\to 0} \frac{\norm{f(x+h) - f(x) - D\circ h}}{\norm{h}} = 0,
  \]
  where $\circ$ is an inner tensor product.
\end{definition}

Here, the dot product notation $D\circ h$ is short for the inner
product $D*_{(s_1s_2, s_2, s_1)}h$, where $s_1s_2$ is the index set of
$D$ and $s_2$ is the index set of $h$. For instance, if
$D\in\setR^{m_1\times n_1 \times n_2}$ and $h\in\setR^{n_1\times
  n_2}$, then $s_1=\{i,j,k\}$ and $s_2=\{j,k\}$.

In the following, we first describe forward and reverse mode automatic
differentiation for expressions in Einstein notation, before we
discuss extensions like cross-country mode and compression of higher
order derivatives that are much easier to realize in Einstein than in
Ricci notation. As can be seen from our experiments in
Section~\ref{sec:experiments}, the extensions allow for significant
performance gains.

\subsection{Forward Mode}

Any tensor expression has an associated directed acyclic expression
graph (expression DAG). Figure~\ref{fig:dag} shows the expression DAG for the
expression
\begin{equation} \label{eq:logregexp}
X^\top (\exp(X\cdot w)+ 1)^{-1}\odot \exp(X\cdot w)),
\end{equation}
where $\odot$ denotes the element-wise multiplication and $^{-1}$ the
element-wise multiplicative inverse.
\begin{figure}[h!]
\centering
\includegraphics[height=7cm]{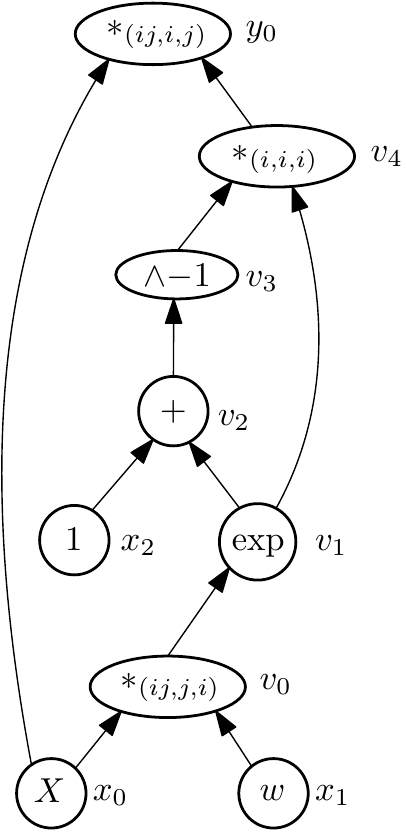}
\caption{Expression DAG for Expression~(\ref{eq:logregexp})}
\label{fig:dag}
\end{figure}
The nodes of the DAG that have no incoming edges represent the
variables of the expression and are referred to as input nodes. The
nodes of the DAG that have no outgoing edges represent the functions
that the DAG computes and are referred to as output nodes. Let the DAG
have $n$ input nodes (variables) and $m$ output nodes (functions). We
label the input nodes as $x_0, ..., x_{n-1}$, the output nodes as
$y_0, ..., y_{m-1}$, and the internal nodes as $v_0,\ldots,
v_{k-1}$. Every internal and every output node represents either a
unary or a binary operator. The arguments of these operators are
supplied by the incoming edges.

In forward mode, for computing derivatives with respect to the input
variable $x_j$, each node $v_i$ will eventually store the derivative
$\frac{\partial v_i}{\partial x_j}$ which is traditionally denoted as
$\dot v_i$. It is computed from input to output nodes as follows: At the input nodes that represent the variables
$x_i$, the derivatives $\frac{\partial x_i}{\partial x_j}$ are
stored. Hence, these are either unit tensors if $i=j$ or zero tensors otherwise.
 Then, the derivatives that are stored at the remaining nodes,
here called $f$, are iteratively computed by summing over all their
incoming edges as
\[
 \dot f = \frac{\partial f}{\partial x_j} = \sum_{z\,:\, (z, f) \in E}
 \frac{\partial f}{\partial z}\cdot \frac{\partial z}{\partial x_j} =
 \sum_{z\,:\, (z, f)\in E} \frac{\partial f}{\partial z}\cdot \dot z,
\]
where $\frac{\partial f}{\partial z}$ is the partial derivative of
node $f$ with respect to $z$ and the multiplication is tensorial. The
so called pushforwards $\dot z$ of the predecessor nodes $z$ of $f$
have been computed before and are stored at $z$. Hence, the derivative
of each function is stored at the corresponding output node $y$ of the
expression DAG. Obviously, the updates can be done simultaneously for
one input variable $x_j$ and all output nodes $y_i$. Computing the
derivatives with respect to all input variables requires $n$ such
rounds.

In the following, we derive the explicit form of the pushforward for
nodes of the expression DAG of a tensor expression. For such a DAG we
can distinguish four types of nodes, namely multiplication nodes,
general unary function nodes, element-wise unary function nodes, and
addition nodes. General unary functions are general tensor-valued
functions while element-wise unary functions are applied to each entry
of a single tensor. The difference can be best explained by the
difference between the matrix exponential function (general unary
function) and the ordinary exponential function applied to every entry
of the matrix (element-wise unary function). The pushforward for
addition nodes is trivially just the sum of the pushforward of the two
summands. Thus, it only remains to show how to compute the pushforward
for multiplication, general unary functions, and element-wise unary
function nodes. 

\begin{theorem}
  Let $x$ be an input variable with index set $s_4$ and let $C = A
  *_{(s_1, s_2, s_3)} B$ be a multiplication node of the expression
  DAG. The pushforward of $C$ is
  \[
  \dot C = B*_{(s_2, s_1s_4, s_3s_4)}\dot A + A*_{(s_1, s_2s_4,
    s_3s_4)}\dot B.
  \]
\end{theorem}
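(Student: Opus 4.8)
The plan is to reduce the tensor-level product rule to the ordinary scalar product rule applied component-wise, and then to reassemble the two resulting sums into the two tensor multiplications on the right-hand side by checking that the index bookkeeping matches. First I would unfold the definition of the multiplication node, writing the $s_3$-component of $C$ as
\[
C[s_3] = \sum_{(s_1\cup s_2)\setminus s_3} A[s_1]\cdot B[s_2].
\]
Since $\dot C$ stores the partial derivatives of $C$ with respect to $x$ and $x$ carries the index set $s_4$, the tensor $\dot C$ has index set $s_3s_4$, and its components are obtained by differentiating each scalar entry $C[s_3]$ with respect to each entry $x[s_4]$. This step is where Definition~\ref{def:1} enters: for the differentiable nodes considered here the Fr\'echet derivative coincides with the tensor of these partial derivatives.

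Next I would apply the Leibniz rule inside the sum. Because differentiation is linear it commutes with the finite summation over $(s_1\cup s_2)\setminus s_3$, and the product rule on the scalar product $A[s_1]\cdot B[s_2]$ yields
\[
\dot C[s_3s_4] = \sum_{(s_1\cup s_2)\setminus s_3}\Big(\tfrac{\partial A[s_1]}{\partial x[s_4]}\, B[s_2] + A[s_1]\,\tfrac{\partial B[s_2]}{\partial x[s_4]}\Big).
\]
The two partial derivatives are exactly the components of the pushforwards already stored at the predecessor nodes, that is $\partial A[s_1]/\partial x[s_4] = \dot A[s_1s_4]$ and $\partial B[s_2]/\partial x[s_4] = \dot B[s_2s_4]$, so the expression splits into one sum linear in $\dot A$ and one sum linear in $\dot B$.

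It then remains to recognize each sum as a tensor multiplication. For the first I would compare it against the definition of $B *_{(s_2, s_1s_4, s_3s_4)} \dot A$, whose $s_3s_4$-component is $\sum_{(s_2\cup s_1s_4)\setminus(s_3s_4)} B[s_2]\cdot \dot A[s_1s_4]$, and for the second against $A *_{(s_1, s_2s_4, s_3s_4)} \dot B$. The heart of the argument, and the step I expect to be the main obstacle, is the index-set identity $(s_1\cup s_2\cup s_4)\setminus(s_3\cup s_4) = (s_1\cup s_2)\setminus s_3$, which must hold for the summation ranges to agree. This is where I would invoke, and make explicit, the standing assumption that the index set $s_4$ of the independent variable is fresh, i.e.\ disjoint from $s_1, s_2$, and $s_3$; combined with $s_3\subseteq s_1\cup s_2$ this guarantees that removing $s_3\cup s_4$ from $s_1\cup s_2\cup s_4$ leaves precisely $(s_1\cup s_2)\setminus s_3$. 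With the summation ranges matched, the commutativity of scalar multiplication (Lemma~\ref{lem:2}) lets me write $\dot A[s_1s_4]\,B[s_2]$ as $B[s_2]\,\dot A[s_1s_4]$, so both sums coincide with the claimed tensor products, and adding them gives the stated formula for $\dot C$.
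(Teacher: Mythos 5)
Your proof is correct, but it takes a genuinely different route from the paper's. The paper works entirely at the level of the Fr\'echet definition and the operator algebra: it fixes one factor at a time, writes out $C(x+h)-C(x)-\dot C\circ h$, uses Lemma~\ref{lem:1} (associativity) and Lemma~\ref{lem:3} (distributivity) to pull the constant factor $A$ outside the difference, and then bounds the norm by $\norm{A}$ times the defining limit for $\dot B$; this verifies the limit in Definition~\ref{def:1} directly and in one pass establishes both existence and the formula. You instead unfold the multiplication component-wise, apply the scalar Leibniz rule, and reassemble the two sums by matching summation ranges. Your approach has the virtue of making fully explicit the index bookkeeping that the paper leaves implicit --- in particular the identity $(s_1\cup s_2\cup s_4)\setminus(s_3\cup s_4)=(s_1\cup s_2)\setminus s_3$ and the standing freshness assumption $s_4\cap(s_1\cup s_2)=\emptyset$, which the paper only encodes tacitly through the concatenation notation $s_3s_4$ and the hypothesis of Lemma~\ref{lem:1}. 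Two small caveats: since the paper's notion of derivative is the Fr\'echet derivative, you should say explicitly that you are using the product rule for \emph{total} derivatives of the scalar components (legitimate here because $\dot A$ and $\dot B$ are assumed to exist as Fr\'echet derivatives, so each component $A[s_1](x)$, $B[s_2](x)$ is totally differentiable) --- mere existence of partials would not suffice --- and that differentiability of every scalar component with the stated linear approximation yields the tensor-level limit in Definition~\ref{def:1} because the tensor norm is a finite sum of squares of components. Also, your citation of Lemma~\ref{lem:2} for commutativity of scalar multiplication is a slight misattribution (that lemma concerns the tensor operator), though nothing hinges on it. The paper's argument has the advantage that the same template carries over to the general unary function nodes, where no component-wise Leibniz rule is available; yours is the more elementary and more transparent argument for the multiplication node specifically.
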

\begin{proof}  
  By the definition of the forward mode the pushforward $\dot C$ is
  given as
  \[
  \dot C = \frac{\partial C}{\partial A}\cdot \dot A + \frac{\partial
    C}{\partial B}\cdot \dot B.
  \]
  We show first how to compute $\frac{\partial C}{\partial B}\cdot
  \dot B$.  According to Definition~4 it holds that
  \[
  \lim_{h\to 0} \frac{1}{\norm{h}}\cdot \norm{B(x+h) - B(x) - \dot B \circ
    h} = 0.
  \]
  We have the following sequence of equalities
  \begin{align*}
    C(x+h) - & C(x) - \dot C \circ h \\
    & =  A *_{(s_1, s_2, s_3)} B(x+h) - A *_{(s_1, s_2, s_3)} B(x)
    - \left(A *_{(s_1, s_2s_4, s_3s_4)} \dot B\right) \circ h \\ 
    & =  A *_{(s_1, s_2, s_3)} B(x+h) - A *_{(s_1, s_2, s_3)} B(x)
    - \left(A *_{(s_1, s_2s_4, s_3s_4)} \dot B\right) *_{(s_3s_4, s_4, s_3)} h \\ 
    & =  A *_{(s_1, s_2, s_3)} B(x+h) - A *_{(s_1, s_2, s_3)} B(x)
    - A *_{(s_1, s_2, s_3)} \left(\dot B *_{(s_2s_4, s_4, s_2)} h\right) \\ 
    & =  A *_{(s_1, s_2, s_3)} \left(B(x+h) - B(x) -
    \dot B *_{(s_2s_4, s_4, s_2)} h\right) \\
    & =  A *_{(s_1, s_2, s_3)} \left(B(x+h) - B(x) - \dot B \circ h\right).
  \end{align*}
  The first equality follows from the definition of $\dot C$, the
  second from the definition of $\circ$, the third from Lemma~1, the
  fourth from Lemma~3, and the last from the definition of $\circ$.
  Thus, we have
  \begin{align*}
    \lim_{h\to 0} \frac{1}{\norm{h}}\cdot \norm{C(x+h) - C(x)
      -\dot C \circ h} 
    & = \lim_{h\to 0} \frac{1}{\norm{h}}\cdot \norm{A*_{(s_1, s_2, s_3)}\left(B(x+h)
      - B(x) -\dot B \circ h\right)} \\
    & \leq \norm{A} \lim_{h\to 0} \frac{1}{\norm{h}}\cdot \norm{B(x+h) -
        B(x) - \dot B \circ h} \\
    & = 0
  \end{align*}
  Hence, we get $\frac{\partial C}{\partial B}\cdot \dot B =A*_{(s_1,
    s_2s_4, s_3s_4)}\dot B$. Similarly, we get that $\frac{\partial
    C}{\partial A}\cdot \dot A = B*_{(s_2, s_1s_4, s_3s_4)}\dot
  A$. Combining the two equalities finishes the proof.
\end{proof}

\begin{theorem}
  Let $x$ be an input variable with index set $s_3$, let $f$ be a
  general unary function whose domain has index set $s_1$ and whose
  range has index set $s_2$, let $A$ be a node in the expression DAG,
  and let $C= f(A)$. The pushforward of the node $C$ is $\dot C =
  f^\prime(A) *_{(s_2s_1, s_1s_3, s_2s_3)} \dot A$, where $f^\prime$
  is the derivative of $f$.  \label{thm:1}
\end{theorem}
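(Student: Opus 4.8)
The plan is to follow the template of the preceding multiplication theorem: set $\dot C := f'(A) *_{(s_2 s_1, s_1 s_3, s_2 s_3)} \dot A$ and verify that this candidate meets the defining limit of the pushforward,
\[
\lim_{h\to 0} \frac{1}{\norm{h}}\norm{C(x+h) - C(x) - \dot C \circ h} = 0,
\]
where $h$ carries the index set $s_3$ of $x$. The essential difference from the multiplication node is that $C = f(A)$ is a genuine composition of the fixed nonlinear map $f$ with the node $A$, rather than a bilinear operation in two separate arguments. Consequently the argument will be a two-stage chain rule, and I expect two Fr\'echet remainder terms to appear rather than the single exact cancellation used before.

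First I would record the two first-order expansions at hand. Writing $\Delta A = A(x+h) - A(x)$, Definition~4 applied to $A$ (whose pushforward $\dot A$ has index set $s_1 s_3$) gives $\Delta A = \dot A \circ h + r(h)$ with $\norm{r(h)} = o(\norm{h})$, while Definition~4 applied to $f$ (whose derivative $f'(A)$ has index set $s_2 s_1$) gives $f(A + \Delta A) - f(A) = f'(A)\circ \Delta A + \rho(\Delta A)$ with $\norm{\rho(\Delta A)} = o(\norm{\Delta A})$. Substituting the first expansion into the second and peeling off the remainders yields
\[
C(x+h) - C(x) = f'(A)\circ(\dot A \circ h) + f'(A)\circ r(h) + \rho(\Delta A).
\]

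The crux is the purely algebraic identity
\[
f'(A)\circ(\dot A \circ h) = \left(f'(A) *_{(s_2 s_1, s_1 s_3, s_2 s_3)} \dot A\right)\circ h = \dot C \circ h,
\]
which I would obtain from associativity (Lemma~1). Expanding both $\circ$ products into their explicit $*$ form and matching the two contractions against the pattern of Lemma~1, one instantiates the lemma's $(s_1, s_2, s_3, s_4)$ as $(s_2 s_1, s_1, s_2, s_3)$ in our index sets. The two conditions required by Lemma~1 then translate, under this instantiation, to $s_2 \subseteq (s_2 s_1)\cup s_1$, which is immediate, and $s_3 \cap ((s_2 s_1)\cup s_1) = \emptyset$, the latter holding because the index set $s_3$ of the input variable is disjoint from the domain and range index sets $s_1, s_2$ of $f$. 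I regard this index bookkeeping, together with the disjointness check, as the one genuinely delicate point of the proof; everything else is routine.

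With the identity in hand, $C(x+h) - C(x) - \dot C \circ h = f'(A)\circ r(h) + \rho(\Delta A)$, so it only remains to show both terms are $o(\norm{h})$. For the first, the submultiplicative bound $\norm{f'(A)\circ r(h)} \le \norm{f'(A)}\,\norm{r(h)}$, of the same kind used already in the previous proof, combined with $\norm{r(h)} = o(\norm{h})$, suffices. For the second, $\Delta A = \dot A \circ h + r(h)$ gives $\norm{\Delta A} = O(\norm{h})$, so $\Delta A \to 0$ as $h\to 0$ and hence $\norm{\rho(\Delta A)} = o(\norm{\Delta A}) = o(\norm{h})$. Dividing by $\norm{h}$ and letting $h\to 0$ drives the limit to $0$, establishing that the claimed $\dot C$ is indeed the pushforward of $C$.
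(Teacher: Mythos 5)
Your proof is correct and follows essentially the same route as the paper's: compose the Fr\'echet expansion of $f$ at $A$ with that of $A$ at $x$, and invoke Lemma~1 with exactly the instantiation you describe to rewrite $f'(A)\circ(\dot A\circ h)$ as $\left(f'(A)*_{(s_2s_1,\,s_1s_3,\,s_2s_3)}\dot A\right)\circ h$. Your explicit remainder terms $r(h)$ and $\rho(\Delta A)$, together with the bounds $\norm{f'(A)\circ r(h)}\le\norm{f'(A)}\,\norm{r(h)}$ and $\norm{\Delta A}=O(\norm{h})$, make fully rigorous the step the paper phrases informally as ``replacing $A(x+h)-A(x)$ with $\dot A\circ h$ in the limit,'' so if anything your writeup is the more careful of the two.
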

\begin{proof}
By Definition~4 we have
  \[
  \lim_{\tilde h\to 0} \frac{1}{\norm{\tilde h}}\cdot \norm{f(A+\tilde h) - f(A) - f'(A) \circ \tilde h}
  = 0.
  \]
Let $\tilde h = A(x+h)-A(x)$. Since $A$ is differentiable, we have that $\tilde h\to 0$ as $h\to 0$. Furthermore, we have that $\norm{A(x+h)-A(x)} \leq 1/c\norm{h}$ for some suitable constant $c$. Hence, we get
\begin{align}
  0 & = \lim_{h\to 0} \frac{1}{\norm{A(x+h)-A(x)}}\cdot \norm{f(A(x+h)) - f(A) - f'(A) \circ \left(A(x+h)-A(x)\right)} \nonumber \\
 & \geq \lim_{h\to 0} \frac{c}{\norm{h}}\cdot \norm{f(A(x+h)) - f(A) - f'(A) \circ \left(A(x+h)-A(x)\right)} \label{eq:t9}
\end{align}

By Definition~4 we also have that
  \[
  \lim_{h\to 0} \frac{1}{\norm{h}}\cdot \norm{A(x+h) - A(x) - \dot A \circ h}
  = 0.
  \]

Hence, we can replace in the limit $A(x+h)-A(x)$ with $\dot A\circ h$ in \eqref{eq:t9} and obtain
\[
  0 \geq \lim_{h\to 0} \frac{c}{\norm{h}}\cdot \norm{f(A(x+h)) - f(A) - f'(A) \circ (\dot A \circ h)}.
\]
Note, that 
\begin{align*}
f'(A) \circ (\dot A \circ h) & = f'(A(x)) \circ (\dot A *_{(s_1s_3, s_3, s_1)} h) \\
    &= f'(A(x)) *_{(s_2s_1, s_1, s_2)}  (\dot A *_{(s_1s_3, s_3, s_1)} h) \\
    &= (f'(A(x)) *_{(s_2s_1, s_1s_3, s_2s_3)}  \dot A) *_{(s_2s_3, s_3, s_2)} h)\\
    &= (f'(A(x)) *_{(s_2s_1, s_1s_3, s_2s_3)}  \dot A) \circ h)
\end{align*}
Hence, we obtain
\[
  0 \geq \lim_{h\to 0} \frac{c}{\norm{h}}\cdot \norm{f(A(x+h)) - f(A) - (f'(A(x)) *_{(s_2s_1, s_1s_3, s_2s_3)}  \dot A) \circ h)}.
\]
Thus, we get $\dot C = f'(A) *_{(s_2s_1, s_1s_3, s_2s_3)} \dot A$ as claimed.
\end{proof}

In case that the general unary function is simply an element-wise unary
function that is applied element-wise to a tensor, Theorem~\ref{thm:1}
simplifies as follows.

\begin{theorem}
  Let $x$ be an input variable with index set $s_2$, let $f$ be an
  element-wise unary function, let $A$ be a node in the expression DAG
  with index set $s_1$, and let $C= f(A)$ where $f$ is applied
  element-wise. The pushforward of the node $C$ is $\dot C =
  f^\prime(A) *_{(s_1, s_1s_2, s_1s_2)} \dot A$, where $f^\prime$ is
  the derivative of $f$.
\end{theorem}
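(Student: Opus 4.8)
The plan is to obtain this statement as a direct specialization of Theorem~\ref{thm:1}, exploiting the fact that the Jacobian of an element-wise function is diagonal. In Theorem~\ref{thm:1} the general unary function $f$ carries a domain index set and a separate range index set. For an element-wise $f$ the output $C=f(A)$ has exactly the same shape as $A$, so the range index set is merely a fresh copy $\bar s_1$ of the domain index set $s_1$. Matching the input-variable index set $s_3$ of Theorem~\ref{thm:1} with $s_2$ here, the general formula reads
\[
\dot C = f'(A) *_{(\bar s_1 s_1,\, s_1 s_2,\, \bar s_1 s_2)} \dot A,
\]
where $f'(A)$ is the full Jacobian carrying the index set $\bar s_1 s_1$.

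The key step I would record next is that, because $f$ acts independently on each entry, the partial derivative of the output entry indexed by $\bar s_1$ with respect to the input entry indexed by $s_1$ vanishes unless the two indices agree. Hence the full Jacobian is diagonal,
\[
f'(A)[\bar s_1, s_1] = f'(A[s_1])\,\delta_{\bar s_1 s_1},
\]
so it suffices to store only its diagonal, namely the tensor with index set $s_1$ whose entry at $s_1$ is $f'(A[s_1])$. This compression is exactly the difference between the $*_{(\bar s_1 s_1, \ldots)}$ operator of Theorem~\ref{thm:1} and the $*_{(s_1, \ldots)}$ operator appearing in the present statement.

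It then remains to carry out the contraction. Writing $\dot C = f'(A) *_{(\bar s_1 s_1, s_1 s_2, \bar s_1 s_2)} \dot A$ componentwise and summing over the shared domain index $s_1$, the delta-tensor collapses the sum and identifies $\bar s_1$ with $s_1$, yielding
\[
\dot C[s_1, s_2] = f'(A[s_1])\cdot \dot A[s_1, s_2],
\]
which is precisely $\dot C = f'(A) *_{(s_1, s_1 s_2, s_1 s_2)} \dot A$ with $f'(A)$ stored in compressed diagonal form, as claimed.

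I expect the only delicate point to be the index bookkeeping: one must justify that contracting a diagonal tensor over one copy of its index is the same as broadcasting its diagonal against $\dot A$, so that the doubled range/domain indices of the general formula legitimately fuse into the single index set $s_1$. An alternative, self-contained route would mirror the proof of Theorem~\ref{thm:1} directly from Definition~\ref{def:1}, replacing the tensorial derivative $f'(A)$ by its diagonal and observing that the element-wise action makes the Fr\'echet limit factor through each entry; this avoids invoking the general case but reproduces essentially the same index manipulation.
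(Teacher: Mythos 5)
Your proposal is correct in substance but takes a genuinely different route from the paper. You derive the element-wise case as a corollary of Theorem~\ref{thm:1}, instantiating the range index set as a copy $\bar s_1$ of the domain index set $s_1$, observing that the Jacobian of an entrywise map is the diagonal tensor $f'(A)[\bar s_1,s_1]=f'(A[s_1])\,\delta_{\bar s_1 s_1}$, and then letting the delta collapse the contraction so that the doubled indices fuse into the single index set $s_1$. The paper instead gives a self-contained argument directly from Definition~\ref{def:1}: it passes to individual scalar entries $A(x)_s$, applies the scalar chain rule to each entry, sums the resulting entrywise limits back into a norm bound, and then uses Lemma~\ref{lem:1} to move $h$ outside the contraction. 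Your route buys economy by reusing the general theorem, and it makes the ``compression'' from the order-$2|s_1|$ Jacobian to its order-$|s_1|$ diagonal explicit, which is conceptually illuminating given the paper's later discussion of compressing derivatives. What it costs is one step that you assert rather than prove: that the entrywise map $A\mapsto f(A)$ is Fr\'echet differentiable in the sense of Definition~\ref{def:1} with exactly that diagonal derivative. Noting that the partial derivatives off the diagonal vanish is not by itself enough, since existence of partial derivatives does not in general give Fr\'echet differentiability; the clean way to close this is precisely the entrywise limit argument the paper carries out, which is also the ``alternative, self-contained route'' you sketch at the end. So your primary route is sound once that lemma is supplied, and your fallback route essentially coincides with the paper's proof.
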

\begin{proof}
  By Definition~4 we have
  \[
  \lim_{h\to 0} \frac{1}{\norm{h}}\cdot \norm{A(x+h) - A(x) - \dot A \circ h}
  = 0.
  \]
  It follows that for every scalar tensor entry $A(x)_s$, where $s$ is
  the multi-index of the entry, that
  \[
  \lim_{h\to 0} \frac{1}{\norm{h}}\cdot \big|A(x+h)_s - A(x)_s - (\dot A
  \circ h)_s\big| = 0.
  \]
  Let $f^h(A,x) = f(A(x+h) - f(A(x))$. Since, $f$ is applied entrywise
  and $f'$ is the derivative of $f$ we get from the chain rule for the
  scalar case that
  \[
    \lim_{h\to 0} \frac{1}{\norm{h}}\cdot \big|f^h(A,x)_s - f'(A(x)_s)
    \cdot (\dot A \circ h)_s\big| = 0.
  \]
  Since this equality holds for all multi-indices $s$ we get by
  summing over these indices that
  \[
  \lim_{h\to 0} \frac{1}{\norm{h}}\cdot \norm{f^h(A,x) - f'(A(x)) *_{(s_1,
      s_1, s_1)} (\dot A \circ h)} = 0.
  \]
  We have
  \begin{align*}
    f'(A(x)) *_{(s_1, s_1, s_1)}  (\dot A \circ h)
    &= f'(A(x)) *_{(s_1, s_1, s_1)}  (\dot A *_{(s_1s_2, s_2, s_1)} h) \\
    &= (f'(A(x)) *_{(s_1, s_1s_2, s_1s_2)}  \dot A) *_{(s_1s_2, s_2, s_1)} h\\
    &= (f'(A(x)) *_{(s_1, s_1s_2, s_1s_2)}  \dot A) \circ h,
  \end{align*}
  where the first and the last equality follow from the definition of
  $\circ$, and the second equality follows from Lemma~1. Hence, we
  have
  \[
    \lim_{h\to 0} \bigg(\frac{1}{\norm{h}}\cdot \bnorm
    f(A(x+h))-f(A(x)) - (f'(A(x)) *_{(s_1, s_1s_2, s_1s_2)} \dot
    A)\circ h)\bnorm \bigg) = 0.
  \]
  Thus, we get $\dot C = f'(A) *_{(s_1, s_1s_2, s_1s_2)} \dot A$.
\end{proof}

\subsection{Reverse Mode}

Reverse mode automatic differentiation proceeds similarly to the
forward mode, but from output to input nodes. Each node $v_i$ will
eventually store the derivative $\frac{\partial y_j}{\partial v_i}$
which is usually denoted as $\bar v_i$, where $y_j$ is the function to
be differentiated. These derivatives are computed as follows: First,
the derivatives $\frac{\partial y_j}{\partial y_i}$ are stored at the
output nodes of the DAG. Hence again, these are either unit tensors if $i=j$ or zero tensors otherwise. Then, the derivatives that are stored at the
remaining nodes, here called $z$, are iteratively computed by summing
over all their outgoing edges as follows
\[
 \bar z = \frac{\partial y_j}{\partial z} = \sum_{f\,:\, (z, f)\in E}
 \frac{\partial y_j}{\partial f}\cdot \frac{\partial f}{\partial z} =
 \sum_{f\,:\, (z, f)\in E} \bar f \cdot \frac{\partial f}{\partial z},
\]
where the multiplication is again tensorial. The so-called pullbacks
$\bar f$ have been computed before and are stored at the successor
nodes $f$ of $z$. This means the derivatives of the function $y_j$
with respect to all the variables $x_i$ are stored at the
corresponding input nodes of the expression DAG. Computing the
derivatives for all the output functions requires $m$ such rounds.

In the following we describe the contribution of unary and binary
operator nodes to the pullback of their arguments.
We have only two types of binary operators, namely tensor
addition and tensor multiplication. In the addition case the
contribution of $C$ to the pullback of both of its arguments is simply
$\bar C$. In Theorem~\ref{thm:pullbackMult} we derive the explicit
form of the contribution of a multiplication node to the pullback of
its arguments, in Theorem~\ref{thm:2} the contribution of a general unary
function, and in Theorem~\ref{thm:pullbackUnary} we derive the
contribution of an element-wise unary function node to its argument.

\begin{theorem} \label{thm:pullbackMult}
  Let $Y$ be an output node with index set $s_4$ and let $C = A
  *_{(s_1, s_2, s_3)} B$ be a multiplication node of the expression
  DAG. Then the contribution of $C$ to the pullback $\bar B$ of $B$
  is $\bar C*_{(s_4s_3, s_1, s_4s_2)}A$ and its contribution to the
  pullback $\bar A$ of $A$ is $\bar C*_{(s_4s_3, s_2, s_4s_1)}B$.
\end{theorem}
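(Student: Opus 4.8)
The plan is to read off the contribution of $C$ to $\bar B$ directly from the reverse-mode update rule, which prescribes it as $\bar C \cdot \frac{\partial C}{\partial B}$, where $\bar C = \frac{\partial Y}{\partial C}$ has index set $s_4s_3$ and the tensorial product is contracted over the index set $s_3$ of $C$. To keep the argument rigorous without ever writing $\frac{\partial C}{\partial B}$ as an explicit delta-decorated tensor, I would characterize the sought contribution (a tensor with index set $s_4s_2$, which I denote $\bar B^{(C)}$) through its action on an arbitrary perturbation $h$ of $B$ (index set $s_2$): by the chain rule it must satisfy $\bar B^{(C)} \circ h = \bar C \circ \big(\tfrac{\partial C}{\partial B}\circ h\big)$ for all $h$, and since $\circ$ against every indicator $h$ determines a tensor uniquely, this pins down $\bar B^{(C)}$.

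First I would compute $\frac{\partial C}{\partial B}\circ h$. Because $C = A *_{(s_1,s_2,s_3)} B$ is linear in $B$, its Fr\'echet derivative with respect to $B$ applied to $h$ is simply $A *_{(s_1,s_2,s_3)} h$, exactly as the derivative of the multiplication node is read off from Definition~4 in the proof of the multiplication pushforward. Substituting $\bar C\circ(\cdot) = \bar C *_{(s_4s_3, s_3, s_4)}(\cdot)$ gives
\[
\bar B^{(C)} \circ h = \bar C *_{(s_4s_3,\, s_3,\, s_4)} \big(A *_{(s_1,s_2,s_3)} h\big).
\]

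The crux is to re-associate the right-hand side so that $h$ is exposed as the outermost $\circ$-factor, that is, to show
\[
\bar C *_{(s_4s_3,\, s_3,\, s_4)} \big(A *_{(s_1,s_2,s_3)} h\big)
= \big(\bar C *_{(s_4s_3,\, s_1,\, s_4s_2)} A\big) *_{(s_4s_2,\, s_2,\, s_4)} h .
\]
I expect this to be the main obstacle, since it is a generalized associativity that does \emph{not} reduce to one application of Lemma~1: there the spectator set $s_4$ is disjoint from the others and enters through concatenation, whereas here the index set $s_1$ of $A$ overlaps $s_2$ and $s_3$ arbitrarily, subject only to $s_3\subseteq s_1\cup s_2$. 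I would therefore verify the identity by expanding both sides into explicit sums. Using $s_3\subseteq s_1\cup s_2$ and that the output index $s_4$ is disjoint from $s_1,s_2,s_3$, both sides collapse to $\sum_{s_1\cup s_2}\bar C[s_4s_3]\cdot A[s_1]\cdot h[s_2]$ with free index set $s_4$: on the left the inner product sums out $(s_1\cup s_2)\setminus s_3$ and the outer product then sums out $s_3$, whereas on the right the inner product sums out $(s_1\cup s_3)\setminus s_2$ and the outer product sums out $s_2$; in both orderings the net summation ranges over all of $s_1\cup s_2$.

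Since $\bar B^{(C)}\circ h = \big(\bar C *_{(s_4s_3, s_1, s_4s_2)} A\big)\circ h$ for every $h$, uniqueness yields $\bar B^{(C)} = \bar C *_{(s_4s_3, s_1, s_4s_2)} A$, the claimed contribution to $\bar B$. The contribution to $\bar A$ then follows by the identical argument after exchanging the two factors through commutativity (Lemma~2), $C = B *_{(s_2, s_1, s_3)} A$, which swaps $s_1 \leftrightarrow s_2$ and produces $\bar C *_{(s_4s_3, s_2, s_4s_1)} B$.
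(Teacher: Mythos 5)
Your proof is correct and follows the same skeleton as the paper's: exploit linearity of $C$ in $B$ so that the directional derivative of $C$ along $h$ is $A *_{(s_1,s_2,s_3)} h$, re-associate $\bar C\circ (A *_{(s_1,s_2,s_3)} h)$ into $(\bar C *_{(s_4s_3,s_1,s_4s_2)} A)\circ h$, read off the pullback contribution, and obtain the $\bar A$ case by Lemma~\ref{lem:2}. The differences are in how the two non-trivial steps are discharged. First, the paper works directly with the Fr\'echet limit of Definition~\ref{def:1}, specializing the perturbation to $\tilde h = A *_{(s_1,s_2,s_3)} h$ and using Lemma~\ref{lem:3} to absorb it into $Y(A *_{(s_1,s_2,s_3)}(B+h))$; you instead take the chain-rule composition $\bar B^{(C)}\circ h = \bar C\circ\bigl(\tfrac{\partial C}{\partial B}\circ h\bigr)$ as given and appeal to uniqueness of a tensor determined by its pairings with all $h$. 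This is a legitimate shortcut, though the paper's limit argument is the more self-contained justification of why that composition rule is valid for the Fr\'echet derivative. Second, and more substantively, the paper justifies the re-association by citing Lemmas~\ref{lem:1} and~\ref{lem:2}, whereas you verify it by expanding both sides into explicit sums over $s_1\cup s_2$. Your observation that the identity is not a literal instance of Lemma~\ref{lem:1} is well taken: that lemma assumes the contracted set is disjoint from $s_1\cup s_2$ and enters only through concatenation, while here $s_1$, $s_2$, $s_3$ may overlap arbitrarily subject to $s_3\subseteq s_1\cup s_2$, so the paper's citation requires some massaging (combining associativity with commutativity and re-identifying the index sets) that is left implicit. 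Your direct expansion, which checks that both orders of contraction collapse to $\sum_{s_1\cup s_2}\bar C[s_4s_3]\cdot A[s_1]\cdot h[s_2]$, is the more airtight way to close that step.
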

\begin{proof}
  Here we only derive the contribution of $C$ to the pullback $\bar
  B$. Its contribution to $\bar A$ can be computed analogously. The
  contribution of $C$ to $\bar B$ is $\bar C\cdot\frac{\partial
    C}{\partial B}$. By Definition~\ref{def:1} we have for the
  derivative $\bar C=\frac{\partial Y}{\partial C}$ of $Y$ with
  respect to $C$ that
  \[
    \lim_{\tilde h\to 0}\, \frac{1}{\|\tilde h\|}\cdot \Big\|
    Y(C+\tilde h) - Y(C) - \bar C \circ \tilde h\Big\| = 0.
  \] 
  By specializing $\tilde h = A *_{(s_1, s_2, s_3)} h$ we get
  \begin{align*}
     Y(C+&\tilde h) - Y(C) - \bar C \circ \tilde h \\
    & =  Y(A *_{(s_1, s_2, s_3)} B + A *_{(s_1, s_2, s_3)} h) - Y(A *_{(s_1, s_2, s_3)} B) - \bar C\circ (A*_{(s_1, s_2, s_3)} h) \\ 
    & =  Y(A *_{(s_1, s_2, s_3)} (B + h)) - Y(A *_{(s_1, s_2, s_3)} B) - \bar C\circ (A*_{(s_1, s_2, s_3)} h) \\ 
    & =  Y(A *_{(s_1, s_2, s_3)} (B + h)) - Y(A *_{(s_1, s_2, s_3)} B) - \bar C *_{(s_4s_3, s_3, s_4)} (A*_{(s_1, s_2, s_3)} h) \\ 
    & =  Y(A *_{(s_1, s_2, s_3)} (B + h)) - Y(A *_{(s_1, s_2, s_3)} B) - (\bar C *_{(s_4s_3, s_1, s_4s_2)} A)*_{(s_4s_2, s_2, s_4)} h) \\ 
    & =  Y(A *_{(s_1, s_2, s_3)} (B + h)) - Y(A *_{(s_1, s_2, s_3)} B)  - (\bar C *_{(s_4s_3, s_1, s_4s_2)} A)\circ h), 
  \end{align*}
  where the first equality follows from the definitions of $C$ and
  $\tilde h$, the second from Lemma~\ref{lem:3}, the third from the
  definition of~$\circ$, the fourth from Lemma~\ref{lem:1}, the fifth
  from Lemma~\ref{lem:2}, and the last again from the definition
  of~$\circ$.  Hence, we have for $\frac{\partial Y}{\partial
    C}\cdot\frac{\partial C}{\partial B}$ that
     \begin{align*}
    0 & = \lim_{\tilde h\to 0} \frac{1}{\big\|\tilde h\big\|}\cdot
    \norm{Y(C+\tilde h) - Y(C) - \bar C \circ \tilde h} \\
    & = \lim_{h\to 0} \frac{1}{\norm{h}}\cdot \bnorm
    Y(A *_{(s_1, s_2, s_3)} (B + h)) - Y(A *_{(s_1, s_2, s_3)} B) -
    (\bar C *_{(s_4s_3, s_1, s_4s_2)} A)\circ h)\bnorm  
  \end{align*}
  Thus, the contribution of $C$ to the pullback $\bar B$ is
  \[
  \frac{\partial Y}{\partial C}\cdot\frac{\partial C}{\partial B} =
  \bar C\cdot \frac{\partial C}{\partial B} = \bar C *_{(s_4s_3, s_1,
    s_4s_2)} A.
  \]
\end{proof}

If the output function $Y$ in Theorem~\ref{thm:pullbackMult} is
scalar-valued, then we have $s_4=\emptyset$ and the pullback function
coincides with the function implemented in all modern deep learning
frameworks including TensorFlow and PyTorch. Hence, our approach can
be seen as a direct generalization of the scalar case.

\begin{theorem} \label{thm:2}
  Let $Y$ be an output function with index set $s_3$, let $f$ be a
  general unary function whose domain has index set $s_1$ and whose range has
  index set $s_2$, let $A$ be a node in the expression DAG, and let
  $C= f(A)$. The contribution of the node $C$ to the pullback $\bar A$
  is \[ \bar f*_{(s_3s_2,s_2s_1,s_3s_1)} f'(A), \] where $f^\prime$ is
  the derivative of $f$.
\end{theorem}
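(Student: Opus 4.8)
The plan is to prove this as an instance of the chain rule for Fr\'echet derivatives, mirroring the proof of Theorem~\ref{thm:1} but with $Y$ now playing the role of the outer function and $f$ the role of the inner function. The contribution of $C$ to $\bar A$ is by definition $\bar C\cdot\frac{\partial C}{\partial A}$, where I write $\bar C = \frac{\partial Y}{\partial C}$ for the pullback stored at the node $C = f(A)$; this is the object denoted $\bar f$ in the statement, and it has index set $s_3s_2$. I would start from the characterization of $\bar C$ supplied by Definition~\ref{def:1},
\[
\lim_{\tilde h\to 0}\frac{1}{\norm{\tilde h}}\cdot\norm{Y(C+\tilde h) - Y(C) - \bar C\circ\tilde h} = 0,
\]
and specialize the perturbation to $\tilde h = f(A+h) - f(A)$, the change in $C = f(A)$ induced by perturbing $A$ by $h$ (so $h$ has index set $s_1$).

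First I would argue, exactly as in Theorem~\ref{thm:1}, that $\tilde h\to 0$ as $h\to 0$ and that $\norm{f(A+h)-f(A)}\leq \frac{1}{c}\norm{h}$ for a suitable constant $c$, which lets me trade division by $\norm{\tilde h}$ for division by $\norm{h}$ at the cost of the factor $c$. This yields
\[
0 \geq \lim_{h\to 0}\frac{c}{\norm{h}}\cdot\norm{Y(f(A+h)) - Y(f(A)) - \bar C\circ\big(f(A+h)-f(A)\big)}.
\]
Next I would invoke the differentiability of $f$ at $A$, which gives $f(A+h)-f(A) = f'(A)\circ h + o(\norm{h})$, and use that $\bar C\circ(\cdot)$ is a bounded linear map: replacing $f(A+h)-f(A)$ by $f'(A)\circ h$ in the last term changes the bracketed expression only by $\bar C\circ o(\norm{h}) = o(\norm{h})$, hence does not affect the limit.

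It then remains to rewrite the nested inner product $\bar C\circ(f'(A)\circ h)$ as a single contraction applied to $h$. Spelling out $\circ$ through its definition gives $\bar C *_{(s_3s_2,\,s_2,\,s_3)}\big(f'(A) *_{(s_2s_1,\,s_1,\,s_2)} h\big)$, and a single application of associativity (Lemma~\ref{lem:1}), instantiating its index sets $(s_1,s_2,s_3,s_4)$ as $(s_3s_2,\,s_2,\,s_3,\,s_1)$, moves the contraction with $f'(A)$ inside to produce $\big(\bar C *_{(s_3s_2,\,s_2s_1,\,s_3s_1)} f'(A)\big)\circ h$. Substituting this back shows that $A\mapsto Y(f(A))$ has derivative $\bar C *_{(s_3s_2,\,s_2s_1,\,s_3s_1)} f'(A)$ at $A$, which is the claimed contribution of $C$ to $\bar A$.

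I expect the main obstacle to be the middle step, namely justifying the replacement of $f(A+h)-f(A)$ by $f'(A)\circ h$ inside the limit; this is the only place where the argument is not purely formal, and it is exactly the delicate interchange that the proof of Theorem~\ref{thm:1} handles by combining the Lipschitz bound on $f$ with the boundedness of the linear pullback operator. The associativity bookkeeping is routine, but it does require checking the disjointness hypothesis of Lemma~\ref{lem:1}, which in this instantiation reads $s_1\cap(s_2\cup s_3)=\emptyset$; this holds under the standing convention that the domain index set $s_1$, the range index set $s_2$, and the output index set $s_3$ are chosen as distinct symbols (the coincident case $s_1=s_2$ being the element-wise specialization treated in Theorem~\ref{thm:pullbackUnary}).
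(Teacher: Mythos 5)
Your proposal follows the same route as the paper's own proof: it specializes $\tilde h = f(A+h)-f(A)$ in the defining limit for $\bar f$, uses the differentiability of $f$ to replace $f(A+h)-f(A)$ by $f'(A)\circ h$ inside the limit, and regroups the nested contraction $\bar f\circ(f'(A)\circ h)$ via Lemma~\ref{lem:1} into $(\bar f *_{(s_3s_2,s_2s_1,s_3s_1)} f'(A))\circ h$. You are somewhat more explicit than the paper about justifying the limit-interchange step and about checking the hypotheses of Lemma~\ref{lem:1}, but the argument is essentially identical.
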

\begin{proof}
  The contribution of the node $C$ to the pullback $\bar A$ is $\bar
  f\cdot\frac{\partial f}{\partial A}$.  By Definition~4 we have for
  the derivative $\bar f=\frac{\partial Y}{\partial f}$ of $Y$ with
  respect to $f$ that
  \begin{equation} \label{eq:t31}
    \lim_{\tilde h\to 0} \frac{1}{\big\|\tilde h\big\|}\cdot
    \norm{Y(f+\tilde h) - Y(f) - \bar f\circ \tilde h} = 0.
  \end{equation}
  By specializing $\tilde h = f(A+h) - f(A)$ and setting $f = f(A)$ we
  get
  \begin{align*}
    & Y(f+\tilde h) - Y(f) - \bar f \circ \tilde h \\
    & \qquad\qquad\qquad=\, Y(f(A+h) -f(A) + f(A)) - Y(f(A))
    - \bar f(A) \circ (f(A+h)-f(A)) \\
    & \qquad\qquad\qquad=\, Y(f(A+h)) - Y(f(A))
    - \bar f(A) \circ (f(A+h)-f(A))
  \end{align*}
  Furthermore, since $f'$ is the derivative of $f$ we have
  \begin{equation} \label{eq:t32}
    \lim_{h\to 0} \frac{1}{\norm{h}}\cdot \norm{\tilde h -f'(A)
        *_{(s_2s_1,s_1,s_2)}h} = 0.
  \end{equation}
  Combining Equations~\eqref{eq:t31} and~\eqref{eq:t32} gives
  \begin{align*}
    0 & = \lim_{\tilde h\to 0} \frac{1}{{\lnorm{\tilde h}}}\cdot
    \norm{Y(f+\tilde h) - Y(f) - \bar f \circ \tilde h} \\
    & = \lim_{h\to 0} \frac{1}{\norm{h}}\cdot \bnorm Y(f(A+h)) - Y(f(A)) 
    - \bar f(A) \circ (f'(A) *_{(s_2s_1,s_1,s_2)} h)\bnorm \\
    & = \lim_{h\to 0} \frac{1}{\norm{h}}\cdot \bnorm Y(f(A+h)) - Y(f(A)) 
    - \bar f(A) *_{(s_3s_2,s_2,s_3)}
    (f'(A) *_{(s_2s_1,s_1,s_2)} h)\bnorm  \\
    & = \lim_{h\to 0} \frac{1}{\norm{h}}\cdot \bnorm Y(f(A+h)) - Y(f(A))
    - (\bar f(A) *_{(s_3s_2,s_2s_1,s_3s_1)}
    f'(A)) *_{(s_3s_1,s_1,s_3)} h\bnorm \\ 
    & = \lim_{h\to 0} \frac{1}{\norm{h}}\cdot \bnorm Y(f(A+h)) - Y(f(A)) 
    - (\bar f(A) *_{(s_3s_2,s_2s_1,s_3s_1)}
    f'(A)) \circ h\bnorm  \\
  \end{align*}
  Hence, the contribution of the node $C$ to the pullback $\bar A$ is
  \[
  \frac{\partial Y}{\partial f}\cdot\frac{\partial f}{\partial A} =
  \bar f\cdot\frac{\partial f}{\partial A} = \bar
  f*_{(s_3s_2,s_2s_1,s_3s_1)} f'(A).
  \]
\end{proof}

In case that the general unary function is simply an element-wise unary
function that is applied element-wise to a tensor, Theorem~\ref{thm:2}
simplifies as follows.

\begin{theorem} \label{thm:pullbackUnary}
  Let $Y$ be an output function with index set $s_2$, let $f$ be an
  element-wise unary function, let $A$ be a node in the expression DAG
  with index set $s_1$, and let $C= f(A)$ where $f$ where $f$ is
  applied element-wise. The contribution of the node $C$ to the
  pullback $\bar A$ is \[ \bar f*_{(s_2s_1,s_1,s_2s_1)} f'(A), \]
  where $f^\prime$ is the derivative of $f$.
\end{theorem}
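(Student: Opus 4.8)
The plan is to follow the proof of Theorem~\ref{thm:2} almost verbatim, specialized to the case where $f$ acts entrywise. In this case the domain and range of $f$ share the single index set $s_1$, and $f'(A)$ is the entrywise derivative, a tensor of index set $s_1$, rather than a full Jacobian of index set $s_2s_1$. The contribution of $C=f(A)$ to $\bar A$ is again $\bar f\cdot\frac{\partial f}{\partial A}$, and I start from the characterization of $\bar f=\frac{\partial Y}{\partial f}$ given by Definition~\ref{def:1}, namely $\lim_{\tilde h\to 0}\frac{1}{\norm{\tilde h}}\norm{Y(f+\tilde h)-Y(f)-\bar f\circ\tilde h}=0$, where $f=f(A)$ and $\tilde h$ both have index set $s_1$ and $\bar f\circ\tilde h=\bar f*_{(s_2s_1,s_1,s_2)}\tilde h$. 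As in Theorem~\ref{thm:2} I specialize $\tilde h=f(A+h)-f(A)$.

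First I would establish the entrywise derivative relation $\lim_{h\to 0}\frac{1}{\norm{h}}\norm{f(A+h)-f(A)-f'(A)*_{(s_1,s_1,s_1)}h}=0$. Since $f$ is applied element-wise, every output entry depends only on the matching input entry, so this follows by applying the scalar definition of the derivative to each component and summing the squared errors over all multi-indices, exactly as in the element-wise forward-mode theorem. Substituting this relation into the specialized Fr\'echet limit --- and using, as in the proofs of Theorems~\ref{thm:1} and~\ref{thm:2}, that $\norm{f(A+h)-f(A)}$ and $\norm{h}$ are comparable up to a constant so the limit in $\tilde h$ can be rewritten as a limit in $h$ --- yields $\lim_{h\to 0}\frac{1}{\norm{h}}\norm{Y(f(A+h))-Y(f(A))-\bar f(A)\circ(f'(A)*_{(s_1,s_1,s_1)}h)}=0$. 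It then remains to perform the rewrite $\bar f(A)\circ(f'(A)*_{(s_1,s_1,s_1)}h)=(\bar f(A)*_{(s_2s_1,s_1,s_2s_1)}f'(A))\circ h$, after which the claimed contribution $\bar f*_{(s_2s_1,s_1,s_2s_1)}f'(A)$ can be read off.

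The hard part will be this last rewrite, which is genuinely more delicate than its forward counterpart. In the element-wise forward theorem the analogous regrouping was justified by Lemma~\ref{lem:1}, because there the perturbation index set was disjoint from the element-wise index set $s_1$. Here the perturbation $h$ carries the same index set $s_1$ as $f'(A)$ and as the contracted slot of $\bar f$, so all three tensors share $s_1$ in a diagonal pattern and the hypothesis $s_4\cap(s_1\cup s_2)=\emptyset$ of Lemma~\ref{lem:1} is violated. I would therefore justify the rewrite directly from the definition of the generic multiplication operator: written out in components, both sides equal $\sum_{s_1}\bar f(A)[s_2s_1]\cdot f'(A)[s_1]\cdot h[s_1]$, so the identity reduces to associativity of scalar multiplication together with the observation that $s_1$ is summed exactly once. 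As a sanity check, the same formula is what one obtains from Theorem~\ref{thm:2} by replacing the full Jacobian of a general unary $f$ with the diagonal Jacobian of an element-wise $f$ and contracting away the resulting delta.
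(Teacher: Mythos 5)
Your proposal is correct and follows essentially the same route as the paper's proof: specialize $\tilde h = f(A+h)-f(A)$ in the Fr\'echet characterization of $\bar f$, invoke the entrywise derivative relation $\lim_{h\to 0}\frac{1}{\norm{h}}\norm{\tilde h - f'(A)*_{(s_1,s_1,s_1)}h}=0$, and regroup to read off $\bar f*_{(s_2s_1,s_1,s_2s_1)}f'(A)$. Your extra care on the regrouping step is warranted and even slightly sharper than the paper, which performs the same rewrite $\bar f*_{(s_2s_1,s_1,s_2)}(f'(A)*_{(s_1,s_1,s_1)}h)=(\bar f*_{(s_2s_1,s_1,s_2s_1)}f'(A))*_{(s_2s_1,s_1,s_2)}h$ without comment even though the disjointness hypothesis of Lemma~\ref{lem:1} is indeed violated; your direct componentwise verification closes that small gap.
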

\begin{proof}
  The contribution of the node $C$ to the pullback $\bar A$ is $\bar
  f\cdot\frac{\partial f}{\partial A}$.  By Definition~4 we have for
  the derivative $\bar f=\frac{\partial Y}{\partial f}$ of $Y$ with
  respect to $f$ that
  \begin{equation} \label{eq:31}
    \lim_{\tilde h\to 0} \frac{1}{\big\|\tilde h\big\|}\cdot
    \norm{Y(f+\tilde h) - Y(f) - \bar f\circ \tilde h} = 0.
  \end{equation}
  By specializing $\tilde h = f(A+h) - f(A)$ and setting $f = f(A)$ we
  get
  \begin{align*}
    & Y(f+\tilde h) - Y(f) - \bar f \circ \tilde h \\
    & \qquad\qquad\qquad=\, Y(f(A+h) -f(A) + f(A)) - Y(f(A))
    - \bar f(A) \circ (f(A+h)-f(A)) \\
    & \qquad\qquad\qquad=\, Y(f(A+h)) - Y(f(A))
    - \bar f(A) \circ (f(A+h)-f(A))
  \end{align*}
  Furthermore, since $f'$ is the derivative of $f$ and $f$ is an
  entrywise function we have
  \begin{equation} \label{eq:32}
    \lim_{h\to 0} \frac{1}{\norm{h}}\cdot \norm{\tilde h -f'(A)
        *_{(s_1,s_1,s_1)}h} = 0.
  \end{equation}
  Combining Equations~\eqref{eq:31} and~\eqref{eq:32} gives
  \begin{align*}
    0 & = \lim_{\tilde h\to 0} \frac{1}{{\lnorm{\tilde h}}}\cdot
    \norm{Y(f+\tilde h) - Y(f) - \bar f \circ \tilde h} \\
    & = \lim_{h\to 0} \frac{1}{\norm{h}}\cdot \bnorm Y(f(A+h)) - Y(f(A)) 
    - \bar f(A) \circ (f'(A) *_{(s_1,s_1,s_1)} h)\bnorm \\
    & = \lim_{h\to 0} \frac{1}{\norm{h}}\cdot \bnorm Y(f(A+h)) - Y(f(A)) 
    - \bar f(A) *_{(s_2s_1,s_1,s_2)}
    (f'(A) *_{(s_1,s_1,s_1)} h)\bnorm  \\
    & = \lim_{h\to 0} \frac{1}{\norm{h}}\cdot \bnorm Y(f(A+h)) - Y(f(A))
    - (\bar f(A) *_{(s_2s_1,s_1,s_2s_1)}
    f'(A)) *_{(s_2s_1,s_1,s_2)} h\bnorm \\ 
    & = \lim_{h\to 0} \frac{1}{\norm{h}}\cdot \bnorm Y(f(A+h)) - Y(f(A)) 
    - (\bar f(A) *_{(s_2s_1,s_1,s_2s_1)}
    f'(A)) \circ h\bnorm  \\
  \end{align*}
  Hence, the contribution of the node $C$ to the pullback $\bar A$ is
  \[
  \frac{\partial Y}{\partial f}\cdot\frac{\partial f}{\partial A} =
  \bar f\cdot\frac{\partial f}{\partial A} = \bar
  f*_{(s_2s_1,s_1,s_2s_1)} f'(A).
  \]
\end{proof}

\subsection{Beyond Forward and Reverse Mode}
\label{sec:improve}

Since the derivative of a function $y$ with respect to an input
variable $x$ is the sum over all partial derivatives along all paths
from $x$ to $y$, see e.g.,~\cite{Griewank08}, we can combine forward and reverse
mode. Using that $\bar v = \frac{\partial y}{\partial v}$ and $\dot v =
\frac{\partial v}{\partial x}$, we get
\[
\frac{\partial y}{\partial x} = \sum_{v\in S} \bar v *_{(s_1s_v,
  s_vs_2, s_1s_2)}\dot v,
\]
where $s_v$ is the index set of node $v$, $s_1$ is the index set of
the output function $y$, $s_2$ is the index set of the input node $x$,
and $S$ is the set of nodes in a cut of the expression DAG.  General
combinations of forward and reverse mode lead to the so-called
cross-country mode. We will show that the differentiation of tensor
expressions becomes even more efficient by a special instantiation of
the cross-country mode and by compressing higher order derivatives.

\paragraph{Cross-Country Mode.}

In both forward and reverse mode, derivatives are computed as
sums of products of partial derivatives. In general, the time for
evaluating the derivatives depends on the order by which the partial
derivatives are multiplied. The two modes multiply the partial
derivatives in opposite order. Derivatives are multiplied from input
to output nodes in forward mode and vice versa in reverse
mode.

If the output function is scalar-valued, then reverse mode is
efficient for computing the derivative with respect to all input
variables.  It is guaranteed that evaluating the derivative takes at
most six times the time for evaluating the function itself. In
practice, usually a factor of two is
observed~\cite{Griewank08}. However, this is no longer true for
non-scalar-valued functions. In the latter case, the order of
multiplying the partial derivatives has a strong impact on the
evaluation time, even for simple
functions, see e.g., Naumann~\cite{Naumann04}. Reordering the multiplication order of the
partial derivatives is known as cross-country mode in the automatic
differentiation literature~\cite{Bischof02}. Finding an optimal
ordering is NP-hard~\cite{Naumann08} in general.

However, it turns out that significant performance gains for
derivatives of tensor expressions can be obtained by the re-ordering
strategy that multiplies tensors in order of their tensor-order, that
is, multiplying vectors first, then matrices, and so on. We illustrate
this strategy on the following example
\begin{equation}\label{eq:XC}
f(x) = B\cdot g(h(Ax)),
\end{equation}
where $A$ and $B$ are two matrices, $x$ is a vector and $g(.)$ and
$h(.)$ are vector-valued functions that also take a vector as input.
The derivative in this case is $B\diag(u)\diag(v)A$, where $u=g^\prime
(h(Ax)),\, v=h^\prime (Ax)$, and $\diag(u)$ is the diagonal matrix
with $u$ on its diagonal. Reverse mode multiplies these matrices from
left to right while forward mode multiplies them from right to
left. However, it is more efficient to first multiply the two vectors
$u$ and $v$ element-wise and then to multiply the result with the
matrices $A$ and $B$.

Actually, the structure of Example~\ref{eq:XC} is not contrived, but
fairly common in second order derivatives. For instance, consider the
expression $\sum g(h(Ax))$, where $g$ and $h$ are as above and the sum
is over the vector components of the vector-valued expression
$g(h(Ax))$. Many machine learning problems feature such an expression
as subexpression, where $A$ is a data matrix and the optimization
variable $x$ is a parameter vector. The gradient of this expression
has the form of Example~\ref{eq:XC} with $B=A^\top$. As can be seen in
the experiments in Section~\ref{sec:experiments}, reordering the
multiplications by our strategy reduces the time for evaluating the
Hessian by about $30\%$.

\paragraph{Compressing Derivatives.} Our compression scheme builds
on the re-ordering scheme (cross-country mode) from above and on the
simple observation that in forward as well as in reverse mode the
first partial derivative is always a unit tensor. It is either, in
reverse mode, the derivative of the output nodes with respect to
themselves or, in forward mode, the derivative of the input nodes with
respect to themselves. This unit tensor can always be moved to the end
of the multiplications, if the order of multiplication is chosen
exactly as in our cross-country mode strategy that orders the tensors
in increasing tensor-order. Then, the multiplication with the unit
tensor at the end is either trivial, i.e., amounts to a multiplication
with a unit matrix that has no effect and thus can be removed, or
leads to a compactification of the derivative.

For an example, consider the loss function
\[
f(U) = \norm{T - UV^\top}^2
\]
of the non-regularized matrix factorization problem which is often used for recommender systems~\cite{Koren09}. Here,
$T\in\setR^{n\times n},\, U, V\in\setR^{n\times k}$ and $n$ is usually
large while $k$ is small. The Hessian of $f$ is the fourth order
tensor
\[
H = 2(V*_{(ij, ik, jk)}V) *_{(jl, ik, ijkl)} \id
\in\setR^{n\times k\times n\times k},
\]
where $\id$ is the identity
matrix. Newton-type algorithms for this problem solve the Newton
system which takes time in $O\left((nk)^3\right)$. However, the
Hessian can be compressed to $2(V*_{(ij, ik, jk)}V)$ which is a small
matrix of size $k\times k$. This matrix can be inverted in $O(k^3)$
time. The performance gain realized by compression can be significant.
For instance, solving the compressed Newton system needs only about
$10~\mu$sec whereas solving the original system needs about $1$~sec
for a problem of size $n=1000$ and $k=10$. For more experimental
results please refer to Section~\ref{sec:experiments}.

As another example, consider a simple neural net with a fixed number
of fully connected layers, ReLU activation functions, and a softmax
cross-entropy output layer.  The Hessian of each layer is a fourth
order tensor that can be written as $A*_{(ijl, ik, ijkl)} \id$ for a
suitable third order tensor $A$. In this case, the Hessian can be
compressed from a fourth order tensor to a third order tensor. For illustrative purposes, we
provide expression trees for both derivatives, compressed and
uncompressed, in the appendix. Computing with the compact
representation of the Hessian is of course more efficient which we
confirm experimentally in the next section.


\section{Experiments}
\label{sec:experiments}

We have implemented both modes of the tensor calculus from the
previous section together with the improvements that can be achieved
by cross-country mode and the compactification of higher order
derivatives. State-of-the-art frameworks like TensorFlow and PyTorch
only support reverse mode since it allows to compute derivatives with
respect to all input variables at the same time. Similarly to all
other frameworks, our implementation performs some expression
simplification like constant folding and removal of zero and identity
tensors. 

\begin{figure*}[h!]
\begin{center}
  \includegraphics[width=0.32\textwidth]{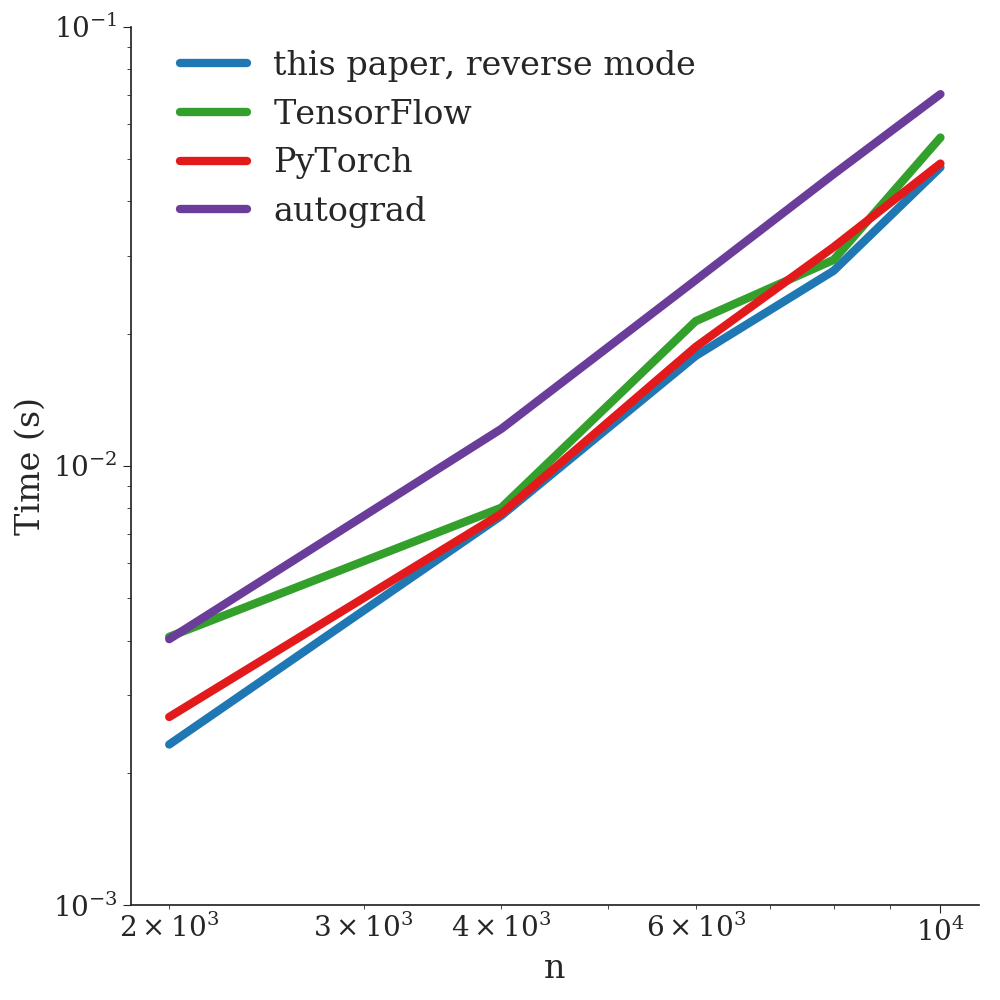}
  \includegraphics[width=0.32\textwidth]{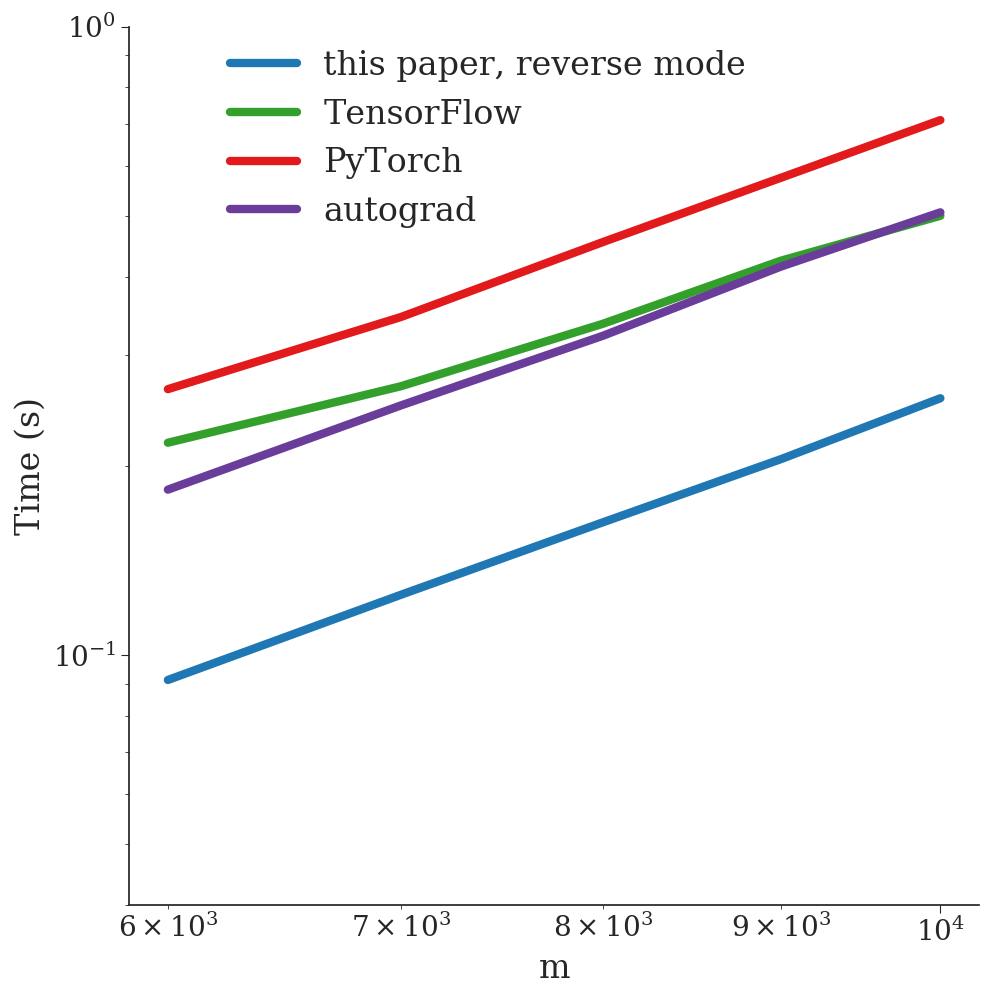}
  \includegraphics[width=0.32\textwidth]{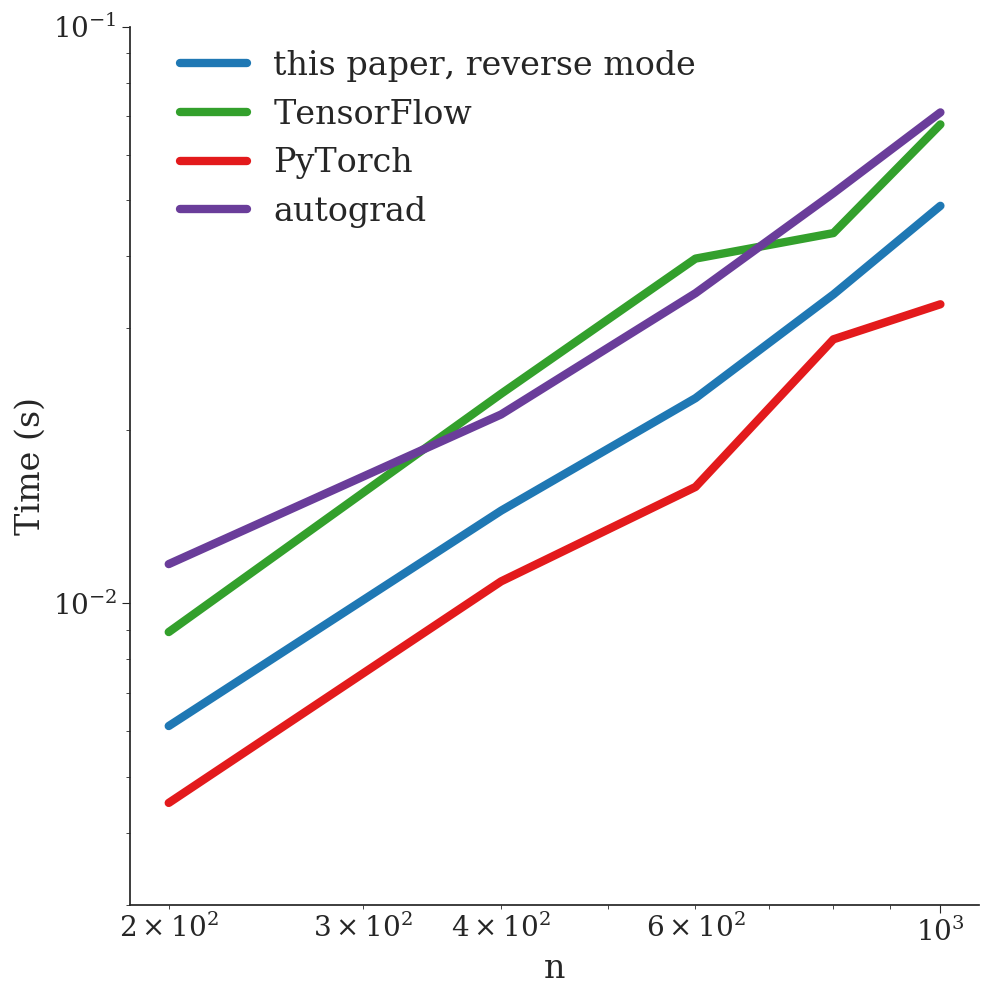}
  \caption{Running times for computing function value and gradient for
    the logistic regression function (left), matrix factorization
    (middle), and a small neural net (right). The times were measured
    on a CPU.}
  \label{fig:ex1}
\end{center}
\end{figure*}

\begin{figure*}[t!]
	\centering
    \begin{tabular}{lcr}
      \includegraphics[width=0.31\textwidth]{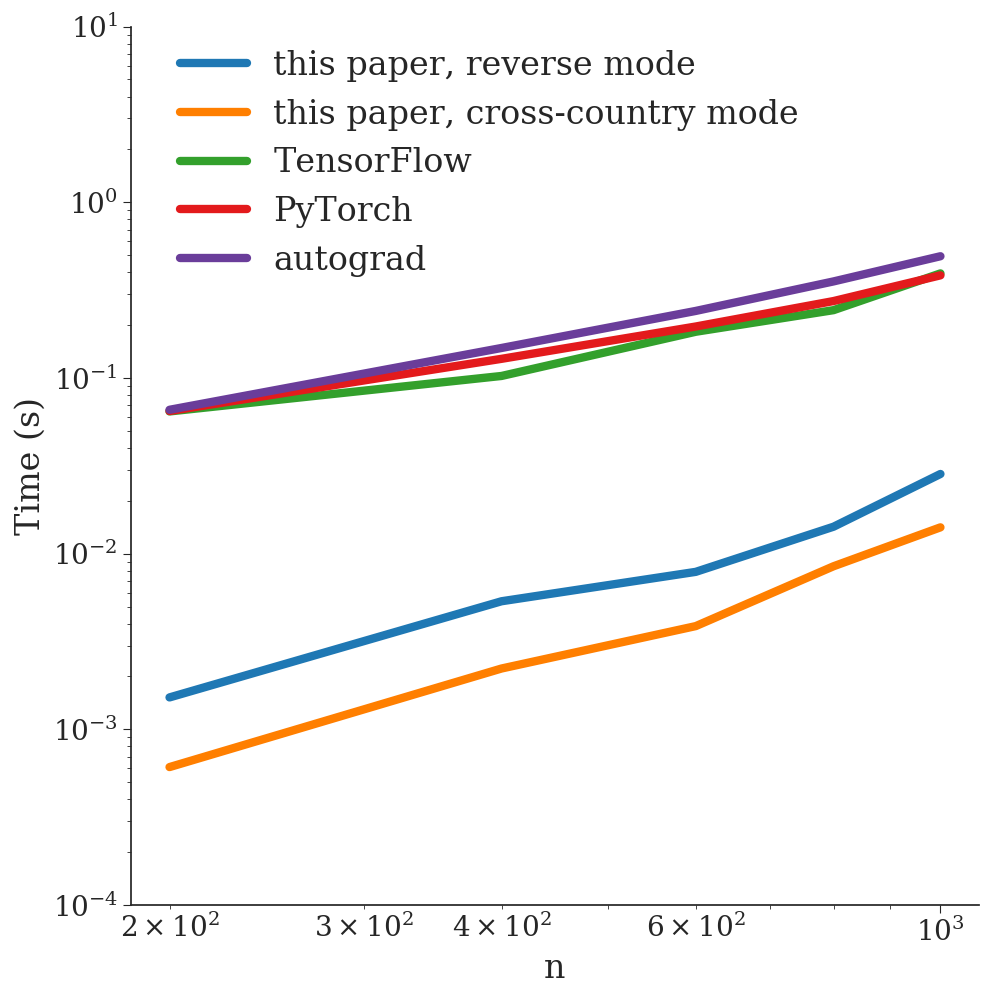} &
      \includegraphics[width=0.31\textwidth]{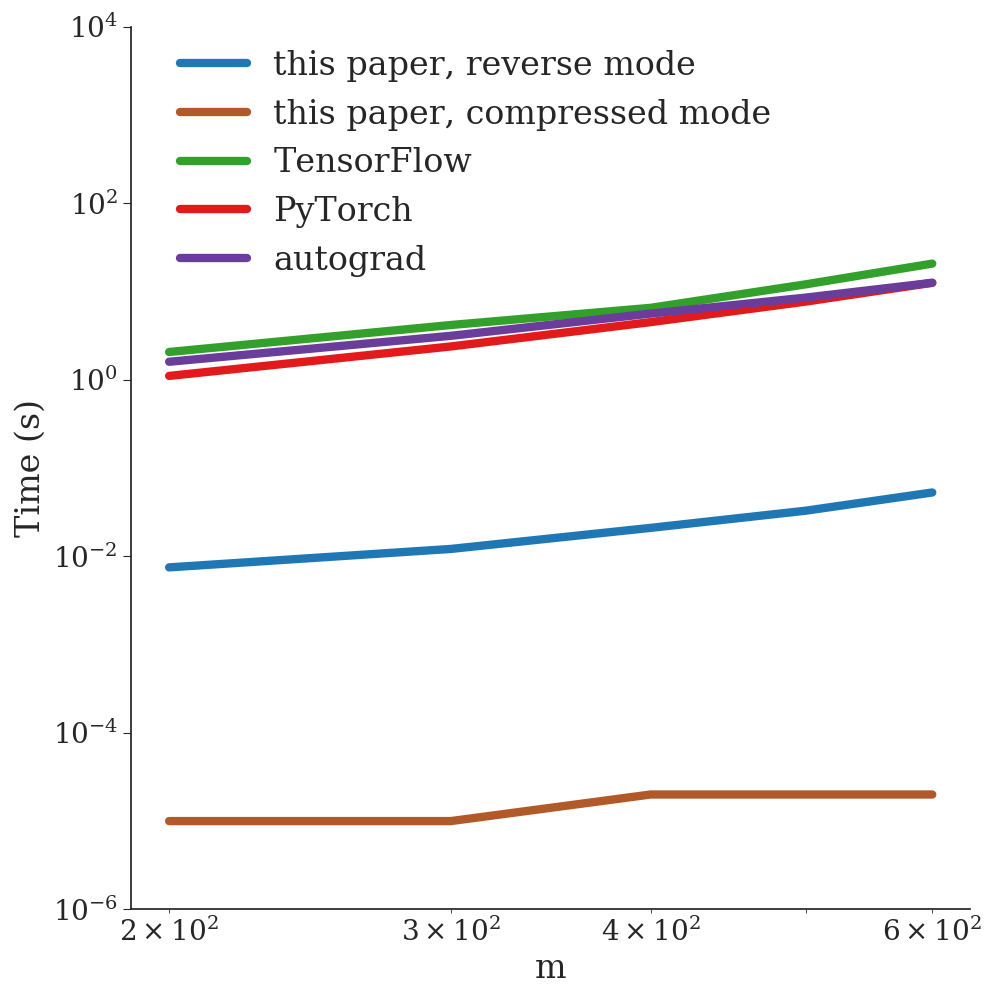} &
      \includegraphics[width=0.31\textwidth]{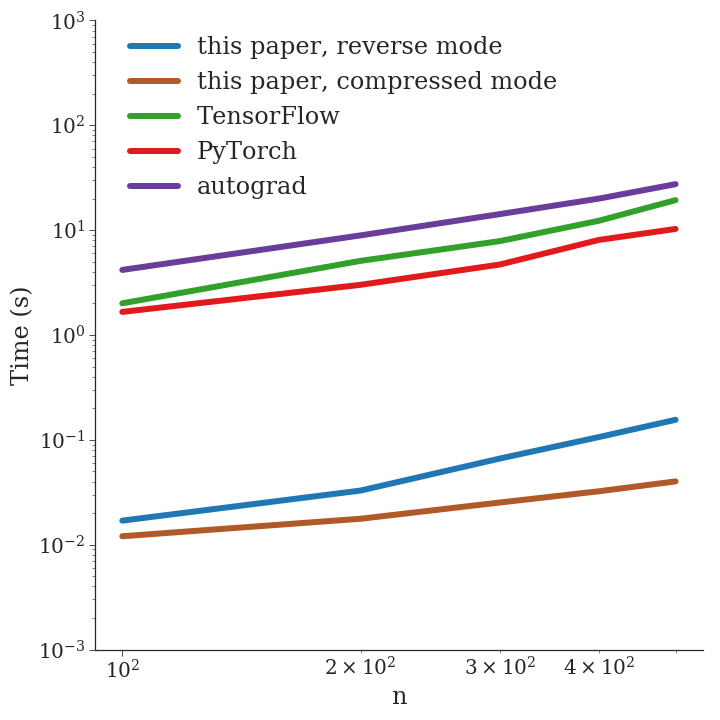} \\
      \includegraphics[width=0.31\textwidth]{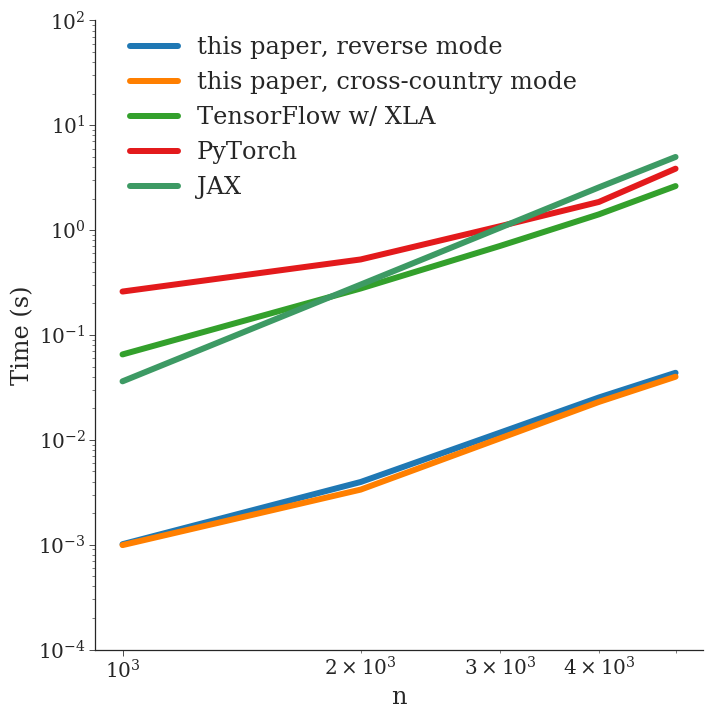} & 
      \includegraphics[width=0.31\textwidth]{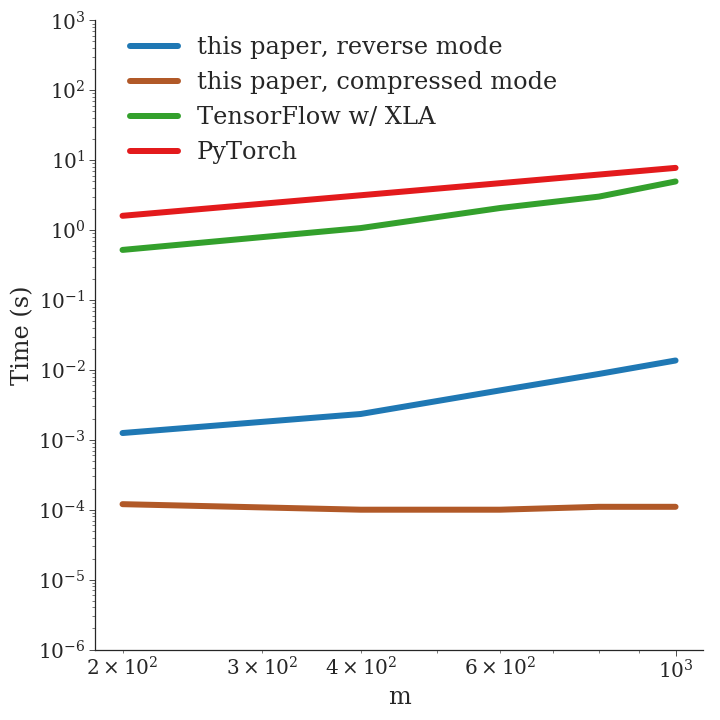} &
      \includegraphics[width=0.31\textwidth]{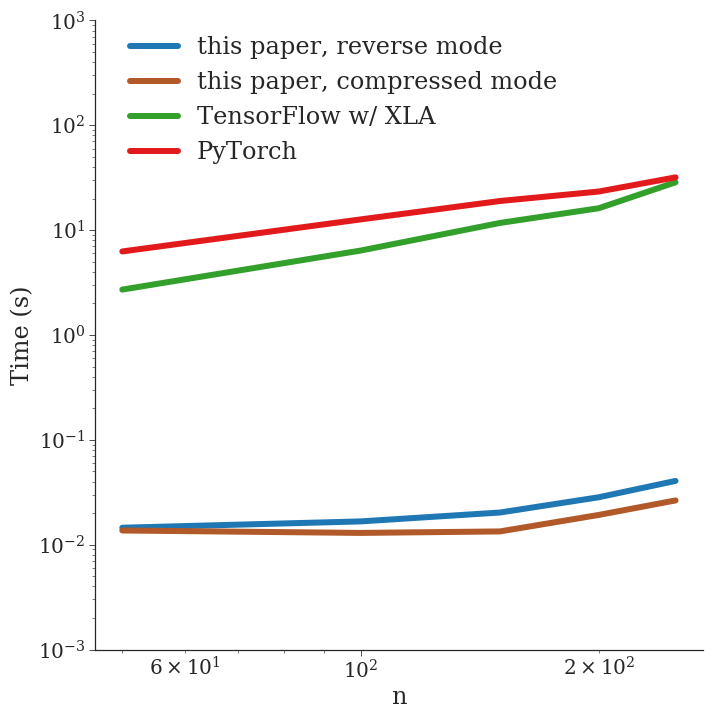}
    \end{tabular}
    \caption{Running times on a CPU (top row) and on a GPU (bottom
      row) for computing the Hessian of the logistic regression
      function (left), matrix factorization (middle), and a small
      neural net (right).}
      \label{fig:ex2}
\end{figure*}

\paragraph{Experimental Setup.} We followed the experimental setup
of Laue et al.~\cite{LaueMG2018}. However, we added one more experiment, a small
neural net.  We have implemented our algorithms in Python. To evaluate
expressions we used NumPy~1.16 and CuPy~5.1.  We compared our
framework with the state-of-the-art automatic differentiation
frameworks that natively support linear algebra operations TensorFlow~1.14, PyTorch~1.0, autograd~1.2, and JAX~0.1.27
used with Python~3.6 that were all linked against the Intel MKL. All
these frameworks support reverse mode automatic differentiation for
computing first order derivatives. For scalar-valued functions the
reverse mode of each of these frameworks coincides with the reverse
mode of our approach. For non-scalar-valued functions all the
frameworks compute the derivative for each entry of the output
function separately. The expression DAGs that are generated by our
reverse mode for general tensor expressions coincide with the
derivatives computed by the approach presented in~\cite{LaueMG2018}.  The
experiments were run in a pure CPU setting (Intel Xeon E5-2643, 8
cores) as well as in a pure GPU setting (NVIDIA Tesla V100), except
for autograd that does not provide GPU support.

We computed function values, gradients, and Hessians for each set of
experiments. We computed Hessians on the CPU as well as on the GPU. To
avoid the issue of sparsity patterns we generated dense, random data
for each experiment. In this setting the running time does not depend
on whether the data are synthetic or real world.

\paragraph{Logistic regression.}  Logistic regression~\cite{Cox58} is
probably one of the most commonly used methods for
classification. Given a set of $m$ data points $X\in\setR^{m\times n}$
along with a set of binary labels $y\in\{\pm 1\}^m$, logistic
regression aims at minimizing the loss function $\sum_i
\log\left(\exp\left(-y^{(i)}\, (X^{(i)}w)\right) + 1\right)$, where
$w\in\setR^n$ is the weight vector, $X^{(i)}$ is the $i$-th data point
($i$-th row of $X$), and $y^{(i)}$ the corresponding $i$-th label. The
data matrix $X$ can be composed of the true input features, features
transformed by basis
functions/kernels~\cite{Broomhead88,Schoelkopf02}, or by random basis
functions~\cite{RahimiR07}, or by features that have been learned by a
deep net~\cite{Hinton06}. We set $m = 2n$ in the experiments.

\paragraph{Matrix factorization.} Matrix factorization can be stated as
the problem $\min_{U, V} \|T-UV^\top\|_\Omega^2$, where
$T\in\setR^{m\times n}$ is some target matrix, $U\in\setR^{m\times k}$
and $V\in\setR^{n\times k}$ are the low-rank factor matrices, and
$\Omega\in\{0, 1\}^{m\times n}$ is an indicator matrix that defines
which elements of $T$ are known. Matrix factorization is mainly used
in the context of recommender
systems~\cite{Koren09} or natural language
processing~\cite{Blei03,Hofmann99}. For the experiments, we set $k=5$ and compute
the gradient and Hessian with respect to $U$. Note that the Hessian is
a fourth order tensor. 

\paragraph{Neural Net.} We have created a small neural net with ten
fully connected layers, ReLU activation functions, and a softmax
cross-entropy output layer. The weight matrices for the different
layers all had the same size, $n\times n$. Here, we report running
times for computing the Hessian of the first layer. Note, that the ReLU activation function, i.e., $f(x) = \max\{0, x\}$ is non-differential at $x=0$. All automatic differentiation packages return a subgradient in this case. However, it has been shown recently~\cite{Lee2020} that this convention still guarantees correctness in a generalized sense.

\paragraph{Evaluation.} In the case of scalar-valued
functions all frameworks basically work in the same way. Thus,
it is not surprising that their running times for computing function
values and gradients are almost the same, see Figure~\ref{fig:ex1}.

The situation is different for Hessians. First, it can be seen that
the reverse mode in our approach, whose results agree with the results
in~\cite{LaueMG2018}, is a few orders of magnitude faster than current
state-of-the-art frameworks like TensorFlow, PyTorch, autograd, and
JAX.  This holds true for all experiments on the CPU and on the GPU,
see Figure~\ref{fig:ex2}.  For the logistic regression problem our
cross-country mode is about $30\%$ faster on the CPU, while its effect
on the GPU is negligible because of the GPU overhead. The performance
gain of our compression scheme can be seen on the matrix factorization
problem and on the neural net (middle and right column of
Figure~\ref{fig:ex2}). Computing Hessians for small neural nets has
now become feasible.

In our experiments, the effect of recent efforts in speeding up deep
learning frameworks turned out to be rather small. Actually, enabling
XLA for TensorFlow on the CPU slowed the computation down by a factor
of two. Hence, we omit these running times in
Figure~\ref{fig:ex2}. Enabling XLA on the GPU provided only marginal
improvements. JAX which relies on XLA did not finish computations but
raised memory errors indicating that it went out of main memory.
These errors seem to be caused by the automatic batching
function~\texttt{vmap} that is used by JAX for auto-vectorization when
computing Hessians. This was surprising to us since JAX is meant to be
more memory efficient than TensorFlow. There is one exception, on the
GPU, JAX finished the computation for the logistic regression
problem. However, as can be seen in Figure~\ref{fig:ex2}, even in this
case it is not significantly more efficient than the other deep
learning frameworks. This was to be expected since JAX relies on XLA
and the authors of XLA report a speed-up of only $15\%$ in the GPU
setting~\cite{XLAspeed}.

\section{Conclusion}

We have developed a simple, efficient and provably correct framework
for computing derivatives of general tensor expressions that is much
simpler than previous approaches. Furthermore, it can be easily
integrated into state-of-the-art frameworks like TensorFlow and
PyTorch that use the same tensor representation, but are a few orders
of magnitude slower than our approach when computing higher order
derivatives. We have also demonstrated that reverse mode automatic
differentiation is not optimal for computing higher order
derivatives. Significant speed ups can be achieved by a special
instantiation of the cross-country mode and by compressing higher
order derivatives. The algorithms presented here form the basis for the online tool \url{www.MatrixCalculus.org}~\cite{LaueMG19,Laue2019} for computing derivatives of matrix and tensor expressions. It is also used within the GENO framework~\cite{genoNIPS} for automatically generating optimization solvers from a given mathematical formulation. It can be accessed at \url{www.geno-project.org}~\cite{LaueMG20}.


\section*{Acknowledgments}
S\"oren Laue has been funded by Deutsche Forschungsgemeinschaft (DFG) under grant LA~2971/1-1.


\newpage
\appendix
\section*{Appendix}
For illustration purposes, Figure~\ref{fig:hessian} shows the computational graph of the Hessian
of a net with three fully connected layers each with a ReLU activation
function and a final cross-entropy layer that has been computed using
reverse mode. Nodes that represent fourth order tensors are marked in
red. Figure~\ref{fig:hessian} shows that is impossible to eliminate these nodes easily when using reverse mode. 
Most of the computation time is actually spend in evaluating
these fourth order tensors. However, using our instantiation of the
cross-country mode it becomes possible to compute the Hessian without
ever creating a fourth order tensor.  See Figure~\ref{fig:hessian2}
for the resulting computational graph. The only node that represents a
fourth order tensor can now be safely removed as described in
Section~3.3 of the main paper.

\vspace{2\baselineskip}

[Figures~\ref{fig:hessian} and~\ref{fig:hessian2} follow on the next two pages]

\begin{figure*}[h!]
\begin{center}
  \includegraphics[width=0.74\textwidth]{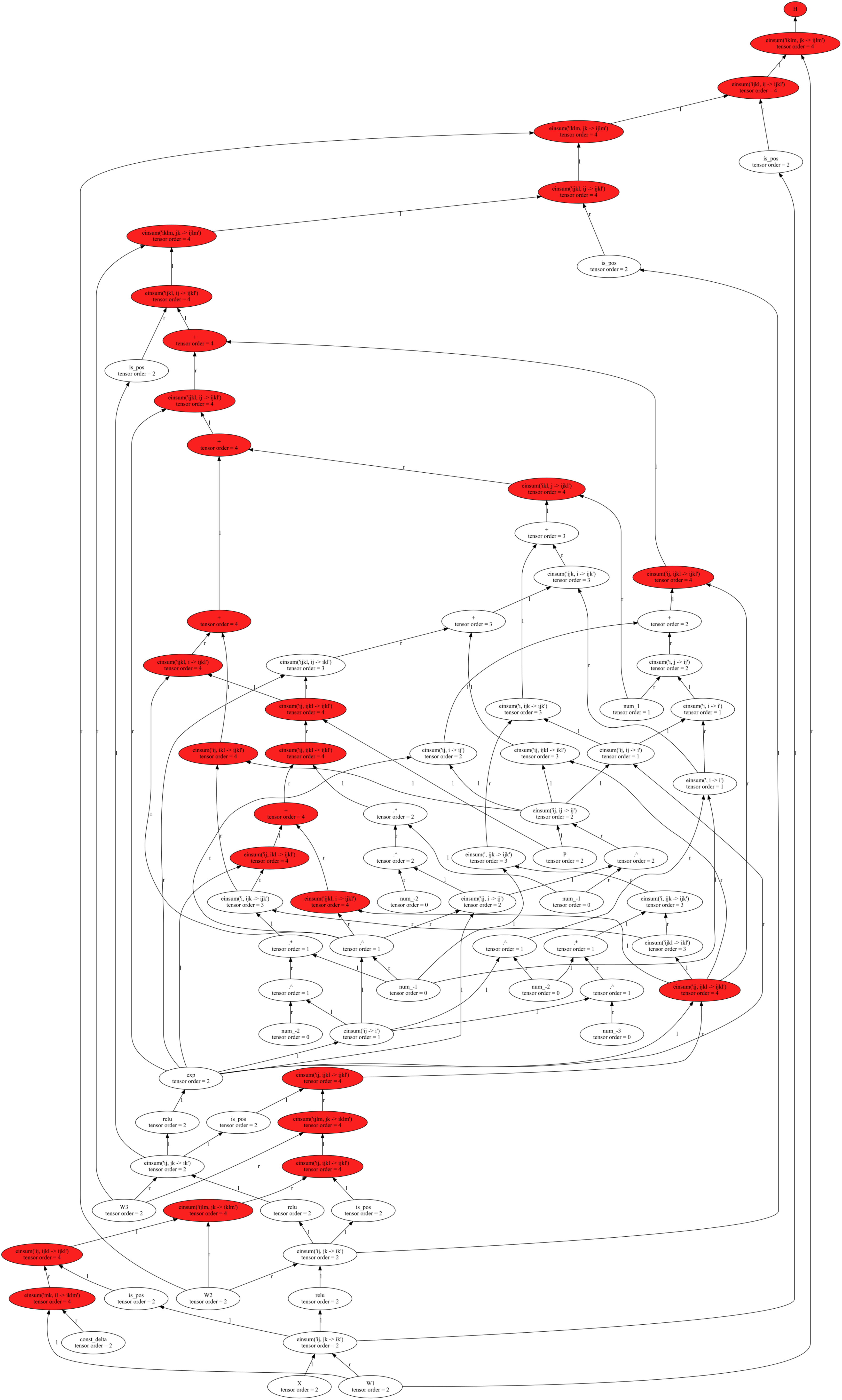}
  \caption{Hessian of a net with three fully connected layers, ReLU
    activation functions, and a cross-entropy layer when computed in
    reverse mode. Fourth order tensors are marked in red. Please note, the purpose of this figure is to illustrate that it is a non-trivial task to remove the fourth order tensor nodes when using reverse mode.}
  \label{fig:hessian}
\end{center}
\end{figure*}
\begin{figure*}[h!]
\begin{center}
  \includegraphics[width=0.68\textwidth]{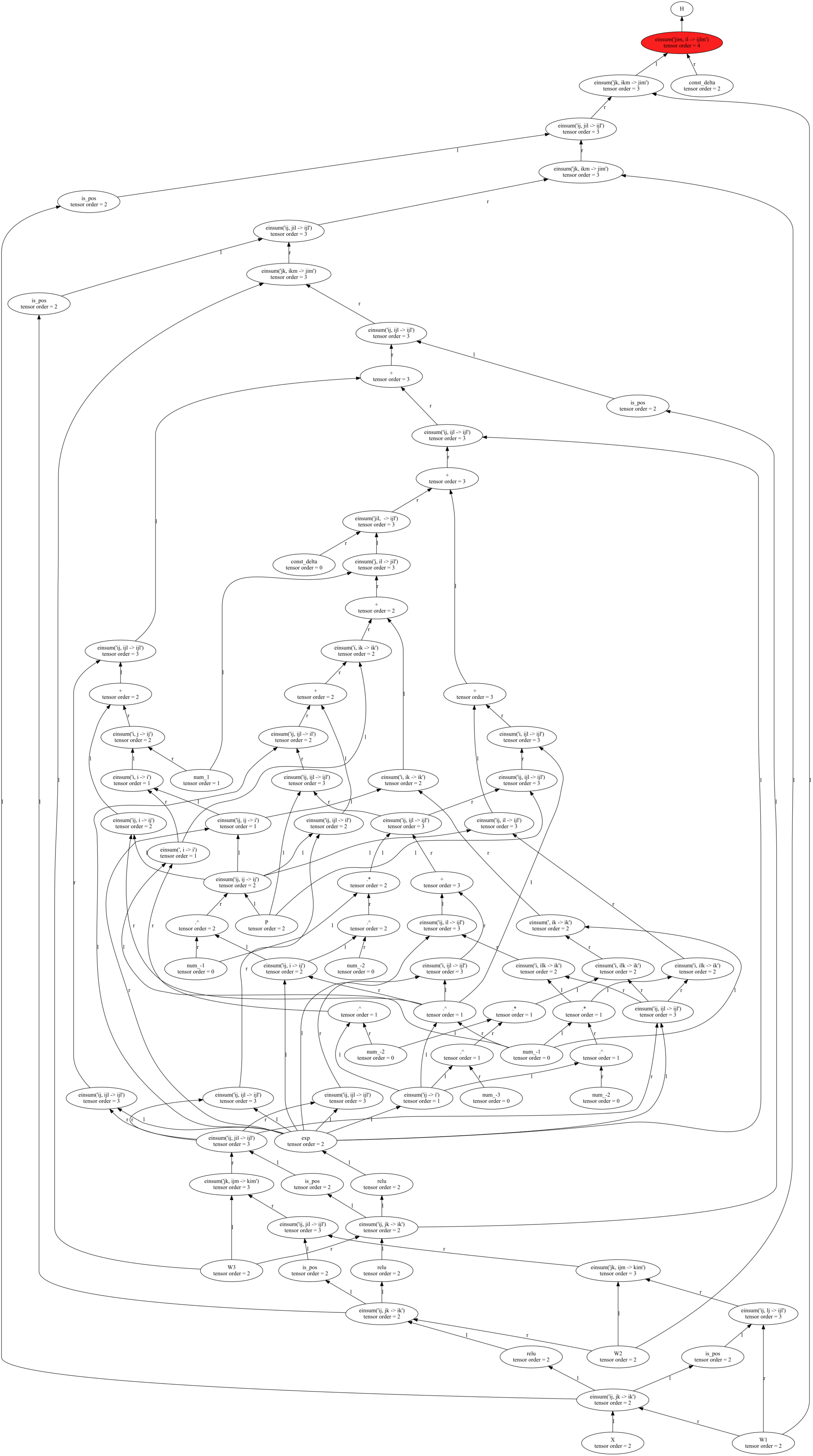}
  \caption{Hessian of a net with three fully connected layers, ReLU
    activation functions, and a cross-entropy layer when computed in
    our cross-country mode. The only node that represents a fourth
    order tensor, shown in red, can be easily removed due to using the cross-country mode.}
  \label{fig:hessian2}
\end{center}
\end{figure*}


\end{document}